\documentclass[preprint,12pt]{elsarticle}

\usepackage{subfig}
\usepackage{amssymb,amsmath,amsthm,amsfonts}
\newtheorem{remark}{Remark}
\newtheorem{theorem}{Theorem}
\newtheorem{lemma}{Lemma}
\newtheorem{assumption}{Assumption}
\usepackage{algorithmic}
\usepackage{algorithm}

\journal{Neural Networks}

\begin{document}

\begin{frontmatter}



\title{Beyond $\ell_2$-norm and $\ell_\infty$-norm: A Curvature-Inspired $\ell_p$-Norm Scheme for Deep Neural Networks}






\author[rvt] {Jianhao Xu}
\ead{jianhaoxu2025@163.com}
\author[rvt]{Zhuang Yang\corref{cor1}}
\ead{zhuangyng@163.com}
\cortext[cor1]{Corresponding author}
\address[rvt]{School of Computer Science and Technology, Soochow University, Suzhou, 215006, China}

\begin{abstract}
The existing optimizers for deep neural networks (DNNs) typically rely on either the $\ell_2$ norm or the $\ell_\infty$ norm, resulting in optimizers that do not adapt well to substantial changes in curvature across parameter dimensions. Generally, the training process of DNNs often exhibits strong curvature anisotropy in the early period, whereas in the later period, the training process of DNNs tends to move toward flatter regions with weaker anisotropy. Particularly, optimizers based on the \(\ell_2\)-norm are usually dominated by high-curvature directions, restricting updates of optimizers along with lower curvature direction and thus leading to a slower convergence rate. While optimizers based on the \(\ell_\infty\)-norm are prone to oscillations in flatter regions, due to the coordinate-wise updates of the same magnitude. To address these two extreme cases generated by $\ell_2$ and $\ell_\infty$ norms, we propose a novel $\ell_p$-norm scheme with a dynamical value of $p$ and incorporate it into stochastic gradient descent (SGD) and SGD with momentum (SGDM), leading to two novel optimizers with better generalization performance: ${\ell_p}$-SGD (LPSGD) and ${\ell_p}$-SGDM (LPSGDM). Particularly, the resulting optimizers suppress the dominance of high-curvature directions in the early period by utilizing a large $p$ ($p>2$), followed by a gradual decrease of $p$ toward 2 to enable more stable and refined updates, where the latter process is motivated by the cosine annealing strategy. We establish theoretical guarantees of the resulting algorithms and analyze that both LPSGD and LPSGDM achieve an \(O(T^{-1/2})\) convergence rate for the nonconvex setting. Extensive experiments are conducted on benchmark datasets, including CIFAR-10, CIFAR-100, and ImageNet-1K, with multiple DNNs such as VGG-11, ResNet-18, and ResNet-50. Demonstrably, all the experimental results effectively verify the superiority of the proposed algorithms.
\end{abstract}



\begin{keyword}


Curvature Anisotropy \sep \(\ell_p\)-Norm Scheduling \sep Deep Neural Network Optimization.
\end{keyword}

\end{frontmatter}



\section{Introduction}
The training of deep neural networks (DNNs) can be formulated as the following empirical risk minimization problem:
\begin{equation}
\label{eq:optimization}
\min_{\theta \in R^d} F(\theta)=\frac{1}{n}\sum_{i=1}^n f_i(\theta),
\end{equation}
where \(\theta \in R^d\) denotes the network parameters, \(n\) is the number of training samples, and \(f_i(\theta)\) is the loss on the \(i\)-th sample. Although Problem \eqref{eq:optimization} looks very simple,  optimization problems closely related to Problem \eqref{eq:optimization} also arise in tasks such as federated learning \cite{li2023}, reinforcement learning \cite{Noorani2025} and large language models \cite{wang}. Practically, Problem \eqref{eq:optimization} is highly nonlinear and nonconvex, bringing out a significant challenge to current optimizers.

For Problem \eqref{eq:optimization}, stochastic gradient descent (SGD) \cite{sgd} is broadly used with the following  simple iteration scheme:
\begin{equation}
\label{eq:sgd}
\theta_{k+1} = \theta_k - \eta \nabla f_i(\theta_k),
\end{equation}
where \(\eta>0\) is the learning rate, and \(\nabla f_i(\theta_k)\) denotes the gradient of the loss function $f_i$ at point $\theta_k$. The iterative scheme \eqref{eq:sgd} is naturally rooted in Euclidean (\(\ell_2\)-norm) geometry, but has achieved significant success in large-scale model training, such as convolutional neural networks \cite{han} and transformer-based language models \cite{porian}. Beyond vanilla SGD, a variety of its variants have been developed, including adaptive gradient methods such as Adam \cite{adam}, AdamW \cite{adamw},  AdaGrad \cite{adagrad} and RMSProp \cite{rmsprop}.
In contrast, sign-based methods are naturally rooted in $\ell_\infty$ geometry \cite{Balles}, where the update of sign-based methods depends only on the coordinate-wise signs of the gradients and particularly iterates as follows:
\begin{equation}
\label{eq:signsgd}
\theta_{k+1} = \theta_k - \eta  \text{sign}(\nabla f_{i}(\theta_k)),
\end{equation}
representative examples include signSGD \cite{signsgd} and more recent sign-driven optimizers such as Lion \cite{lion} and Lion-$K$ \cite{lionk}. More specifically, Adam can also be viewed as a variant of signSGD under the case $\beta_1=\beta_2=0$ \cite{Balles,adamsignsgd}.

Under the \(\ell_2\)-norm geometry, Dauphin et al. \cite{Dauphin} pointed out that highly anisotropic curvature makes gradient descent be dominated by high-curvature directions, resulting in the requirement of a small stable learning rate (or equivalently, the step size), which slows the training progress of the model along flat directions.  Cohen et al. \cite{Cohen} further proved that the relevant stability threshold of the learning rate is determined by the largest Hessian eigenvalue, particularly with the learning rate scale on the order of \(1/\lambda_{\max}\). While under the \(\ell_\infty\)-norm geometry, Balles et al. \cite{Balles} pointed out that sign-based updates are theoretically favored when the Hessian is close to diagonal and highly anisotropic. Yet, Li et al. \cite{Li} proved that signGD is not generally convergent even on smooth strongly convex problems. Summarily, the above analyses indicate that the suitability of a given norm geometry depends strongly on the curvature structure, specifically, on the dispersion of the Hessian eigenvalues.

Furthermore, recent studies have shown that the loss function of DNNs usually exhibits different curvatures across different parameter dimensions. 
Ghorbani et al.~\cite{anisotropy1} analyzed Hessian eigenvalue densities in neural net optimization and found a spectra consisting of a dense bulk near zero together with a few large outliers. 
To better illustrate the above discussion, we plot the evolution of the Hessian eigenvalue distribution during the training of ResNet-18 on CIFAR-10 and ResNet-50 on ImageNet-1K. In Fig. 1, the horizontal axis denotes eigenvalues, and the vertical axis denotes training epochs. As illustrated in Fig. 1, at the beginning of training, the set of Hessian eigenvalues contains clear outlier large eigenvalues (e.g., 17.5, 1180) along with some negative eigenvalues (e.g., -2.5, -680), indicating highly curvature anisotropic and nonconvexity. As training progresses, most eigenvalues fluctuate around zero, the occurrence of extreme eigenvalues becomes less frequent, and negative eigenvalues disappear, suggesting that the model transitions to a weak curvature, approximately convex region.

\begin{figure}[!t]
\centering
\subfloat[\label{fig_1_a}]{
    \includegraphics[width=0.7\textwidth]{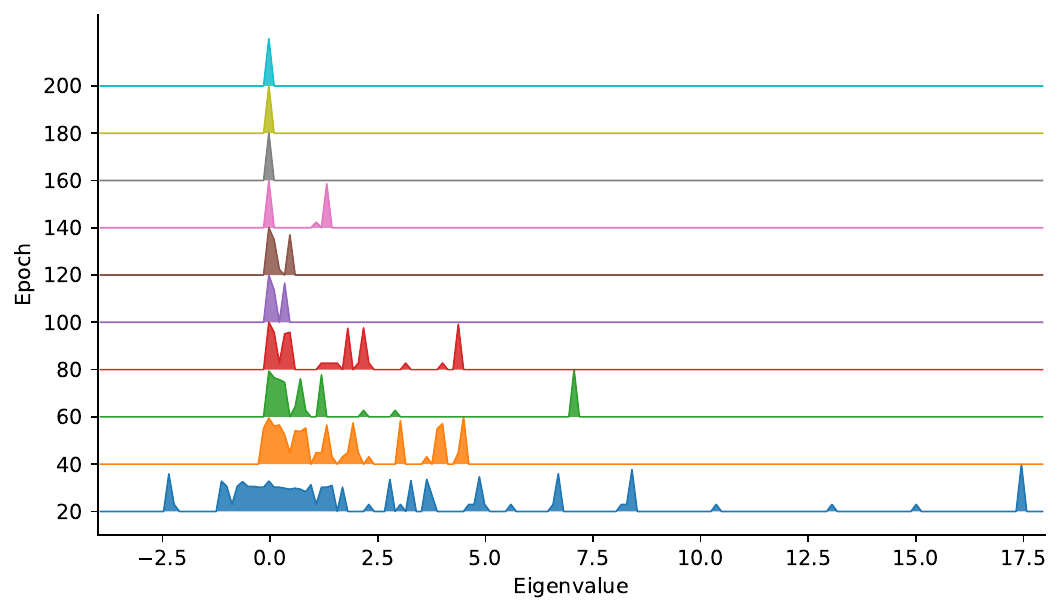}
}
\hfill
\subfloat[\label{fig_1_b}]{
    \includegraphics[width=0.7\textwidth]{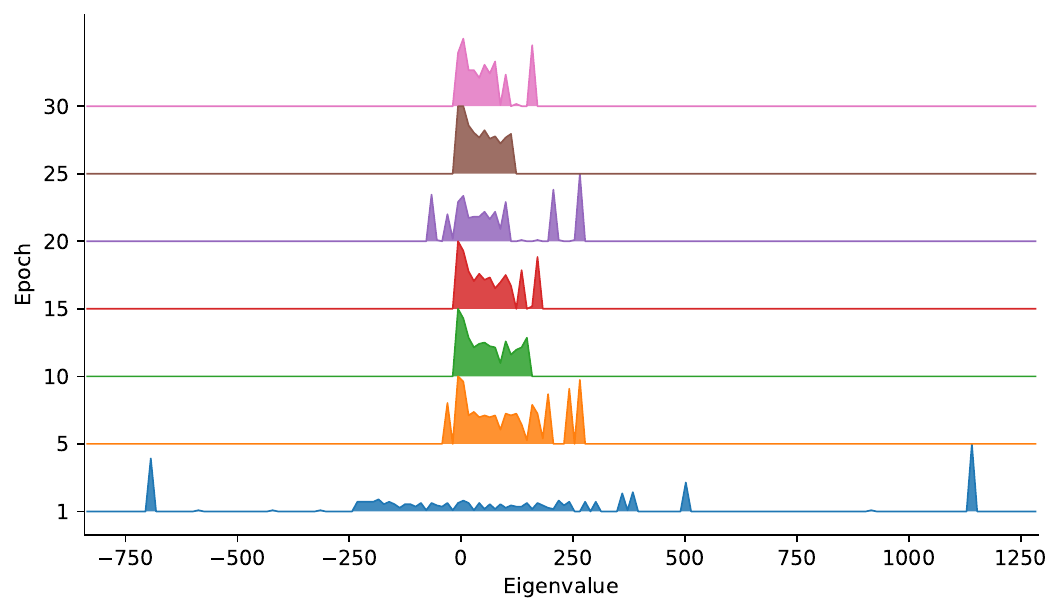}
}
\caption{The evolution of Hessian eigenvalue distribution during training. (a) ResNet-18 on CIFAR-10, sampled every 20 epochs. (b) ResNet-50 on ImageNet-1K, sampled every 5 epochs.}
\label{fig_1}
\end{figure}

As shown in Fig.~1, the Hessian eigenvalue distribution evolves from being widely dispersed with pronounced outliers to becoming more concentrated over training. Motivated by this observation, together with recent studies on optimization under the \(\ell_2\)-norm and \(\ell_\infty\)-norm geometries, we formulate the following key research questions:

\begin{itemize}
\item[] \textbf{Question 1:} \emph{Do we need to fasten the norm scheme, i.e., only using $\ell_2$-norm, or $\ell_\infty$-norm, separately, in the training of DNNs, facing the challenge of substantial change in curvature structure?}

\item[] \textbf{Question 2:} \emph{If not, do there exist some certain types of the norm scheme (except for $\ell_2$-norm and $\ell_\infty$-norm) to appropriately match the curvature structure at different training stages in the training of DNNs?}
\end{itemize}

We provide affirmative answers for the above two questions. We introduce a cosine $\ell_p$-norm scheme that challenges the conventional approaches of employing a fixed norm geometry. Notably, this scheme captures curvature variation with negligible computational overhead and can be naturally extended to existing first-order optimizers. Empirical evidence shows that, when applied to SGD and SGDM, the proposed strategy consistently enhances the generalization of SGD and SGDM.

\begin{figure*}[!t]
\centering
\subfloat[]{\includegraphics[width=3.5in]{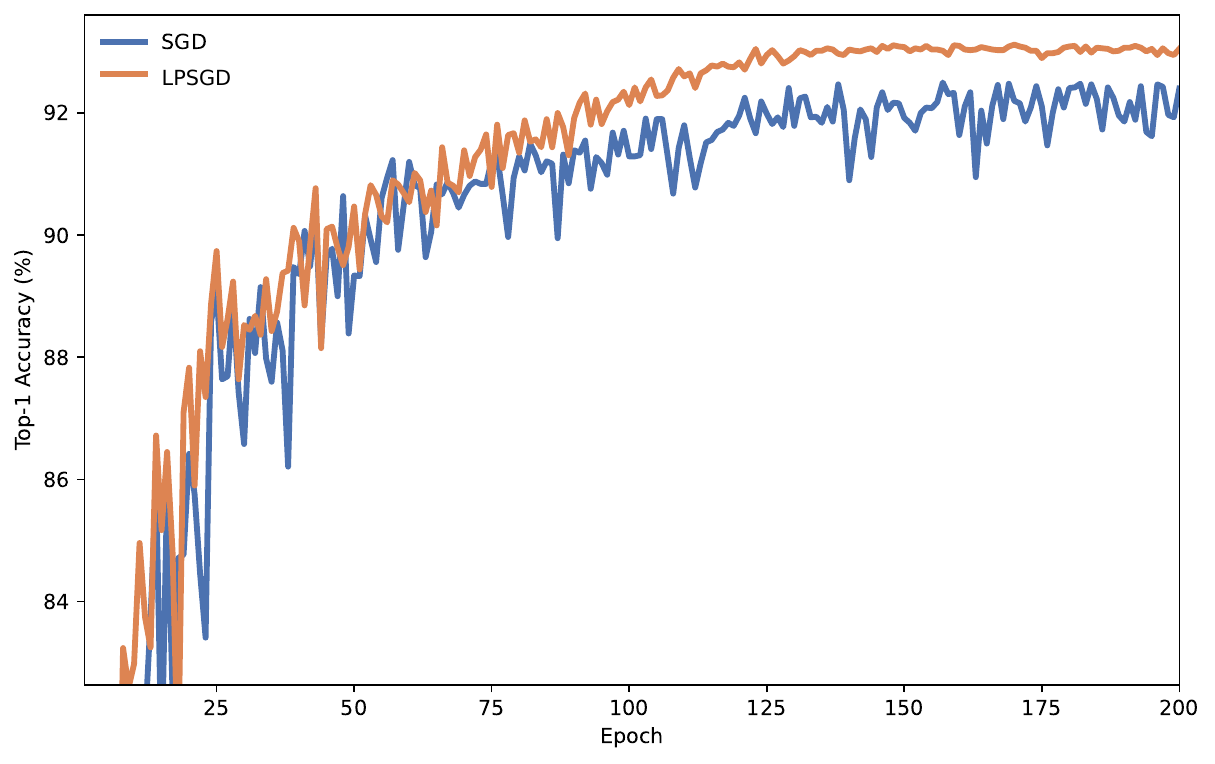}%
\label{fig_first_case}}
\hfil
\subfloat[]{\includegraphics[width=3.5in]{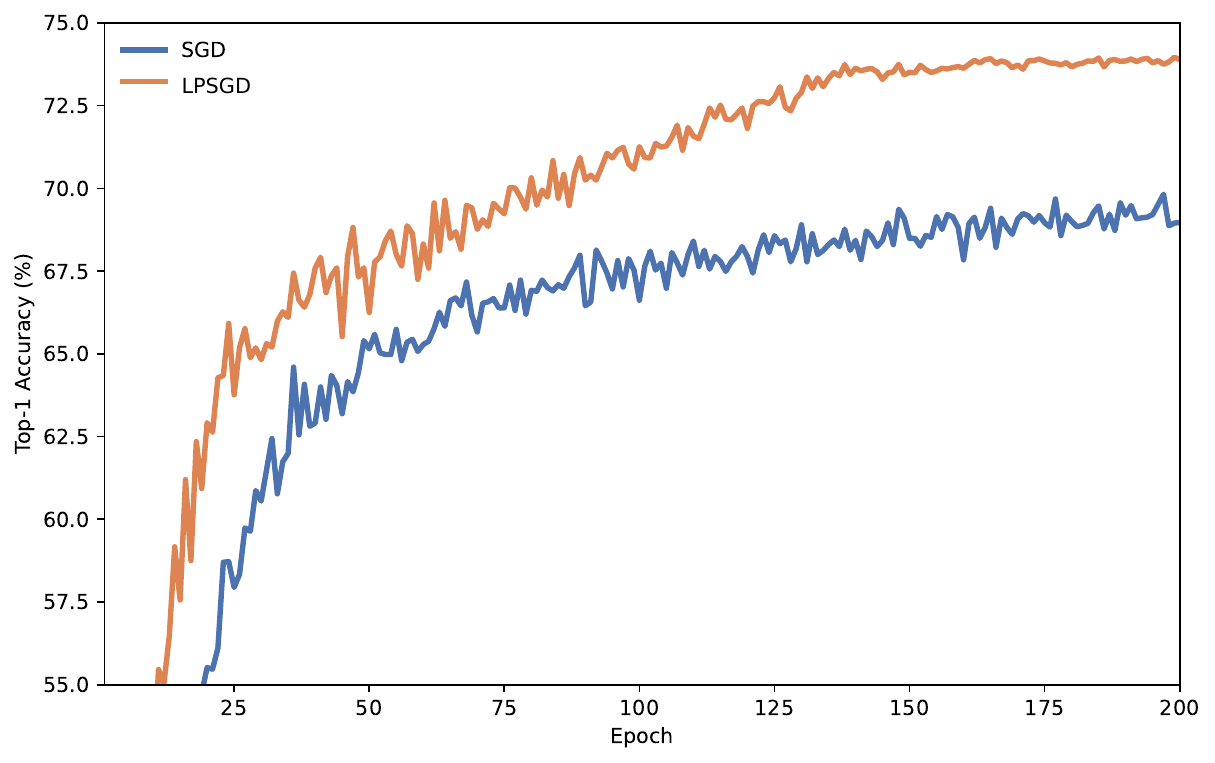}%
\label{fig_second_case}}
\caption{(a) Testing top-1 accuracy of SGD and LPSGD on ResNet-18/CIFAR-10. (b) Testing top-1 accuracy of SGD and LPSGD on ResNet-50/CIFAR-100.}
\label{fig_sim}
\end{figure*}

The main contributions of this work are summarized as follows:
\begin{itemize}
    \item We identify and characterize the evolution of curvature anisotropy during DNNs training. Specifically, we eliminate the diverse effect of such curvature anisotropy on practical applications by developing a cosine $\ell_p$-norm scheme.
    \item We apply cosine $\ell_p$-norm scheme to SGD and SGD with momentum (SGDM), leading to LPSGD and LPSGDM. More specifically, we rigorously prove the convergence behavior of the proposed algorithms for nonconvex optimization problems and show that they achieve a convergence rate of \(O(T^{-1/2})\).
    \item Extensive experiments are conducted on benchmark datasets, including CIFAR-10, CIFAR-100, and ImageNet-1K, using multiple DNN architectures, including VGG-11, ResNet-18, and ResNet-50. Specifically, the proposed algorithms outperform state-of-the-art baselines by 0.68\%, 0.31\%, and 0.27\% on CIFAR-10, and by 1.12\%, 1.11\%, and 1.87\% on CIFAR-100, respectively. On ImageNet-1K with ResNet-50, they further achieve a 1.89\% improvement over the best baseline.
\end{itemize}

To provide an initial empirical validation for the cosine \(\ell_p\)-norm scheme in first-order methods, we first compare SGD and the resulting algorithm, LPSGD (Algorithm \ref{LPSGD}), where the former is a significantly popular optimizer in DNN. Specifically, we conduct two simple experiments on ResNet-18\cite{resnet}/CIFAR-10\cite{cifar} and ResNet-50\cite{resnet}/CIFAR-100\cite{cifar}. The results are reported in Fig.~2, where the horizontal axis denotes the training epoch and the vertical axis denotes testing top-1 accuracy. Notice that, LPSGD denotes SGD augmented with the proposed cosine $\ell_p$-norm scheme. Fig.~2 demonstrates that in contrast to SGD, LPSGD improves the final testing top-1 accuracy from \(92.40\%\) to \(93.05\%\) on ResNet-18/CIFAR-10, and from \(68.97\%\) to \(73.89\%\) on ResNet-50/CIFAR-100, providing preliminary evidence for the benefit of cosine \(\ell_p\)-norm scheme in improving generalization.

\section{Related Work}
In order to follow us easily, this work will discuss the prior studies from two perspectives in this section. Specifically, we first review the methods for evaluating curvature information in DNNs, then discuss those optimizers that explicitly exploit curvature information.

{\bf{Curvature Estimation in Deep Models:}} Based on the Hessian-based analysis technique,
several studies have studied how to characterize and utilize the curvature information of DNNs. Early empirical studies by Sagun et al. \cite{sagun} revealed that the Hessian spectrum of over-parameterized deep models typically consists of a bulk concentrated near zero together with a small number of isolated outlier eigenvalues, highlighting strong spectral anisotropy. Pennington and Bahri \cite{pennington} further analyzed the Hessian spectra from the perspective of random matrix theory, providing a theoretical view of the eigenvalue distribution of neural network loss surfaces. To make such analysis scalable, Ghorbani et al. \cite{ghorbani} developed a stochastic Lanczos quadrature-based method for estimating the spectral density of the Hessian matrix during DNN training. Yao et al. \cite{yao} subsequently introduced PyHessian, a practical framework for computing top Hessian eigenvalues, Hessian trace, and Hessian spectral density in large-scale deep models. More recently, Sankar et al. \cite{sankar} extended Hessian spectral analysis to the layer level, showing that layerwise eigenspectra and trace evolution provide additional insight into the geometry of deep-network training. These studies developed effective methods for measuring curvature information in DNNs and further described the curvature structures observed during deep model training, motivating us to develop more efficient optimizers according to geometry information for deep models.

{\bf{Optimizers Exploiting Curvature Information:}}
Many studies have directly exploited curvature-related or second-order information (mentioned above) in optimization, thus leading to various novel curvature-based optimizers. K-FAC \cite{kfac} approximated natural gradient using Kronecker-factored curvature blocks. Shampoo \cite{shampoo} performed tensor-structured preconditioning through per-dimension second-moment statistics. Sophia \cite{sophia} used a lightweight stochastic approximation to the diagonal Hessian to improve the scalability of second-order optimization. More recently, SALA \cite{sala} combined quadratic trajectory approximation and sharpness-aware updates to seek flatter regions. In contrast, our method only uses first-order information that leverages non-Euclidean geometry, but achieves better performance than state-of-the-art optimizers.

\section{Preliminaries and Assumptions}

Several notations and standard assumptions, used in this work, will be given below.

\textbf{Notations.} Throughout this paper, \(\|\cdot\|_p\) denotes the standard \(\ell_p\)-norm and \(\|\cdot\|_q\) denotes its dual norm, where \(1/p+1/q=1\). For a vector \(x\in \mathbb{R}^d\), \(x^{(i)}\) denotes its \(i\)-th coordinate. We use \(|x|\) and \(\operatorname{sign}(x)\) to denote element-wise absolute value and sign, respectively. In addition, let \(g_t\) denote the stochastic gradient at iteration \(t\), and let \(\nabla f(\theta_t)\) denote the full gradient at \(\theta_t\). 

We make the following assumptions throughout the paper.

\begin{assumption}[\(\ell_{p}\)-smoothness]
\label{ass:lpsmooth}
The objective function \(f:\mathbb{R}^d\to\mathbb{R}\) is L-smooth. Moreover, there exists a constant \(L>0\), then, for all $x,y\in\mathbb{R}^d$,
\begin{equation}
    f(y)\le f(x)+\langle \nabla f(x),y-x\rangle+\frac{L}{2}\|y-x\|_{p}^2.
\end{equation}
\end{assumption}

\begin{assumption}[Unbiased stochastic gradient]
\label{ass:unbiased}
Let \(g_t\) be the stochastic gradient estimator for each iteration \(t\). Then
\begin{equation}
    \mathbb{E}[g_t \mid \theta_t]=\nabla f(\theta_t).
\end{equation}
\end{assumption}

\begin{assumption}[Bounded conditional gradient noise]
\label{ass:noise}
There exists a constant \(\sigma>0\) such that, for any iteration \(t\), the conditional second moment of the gradient noise is bounded as
\begin{equation}
    \mathbb{E}\!\left[\|g_t-\nabla f(\theta_t)\|_{q}^2 \mid \theta_t\right]\le \sigma^2.
\end{equation}
\end{assumption}

\begin{assumption}[Lower bounded objective]
\label{ass:lower}
The objective function is bounded from below, i.e.,
\[
f^\star := \inf_{\theta\in\mathbb{R}^d} f(\theta) > -\infty.
\]
\end{assumption}

\section{Method}

In this section, we will present our cosine \(\ell_p\)-norm scheme for the training of DNNs. Concretely, we first analyze its motivation from the perspective of curvature anisotropy in subsection \ref{method-1}. We then introduce the cosine \(\ell_p\)-norm scheme strategy and show how it can be incorporated into SGD and SGDM, resulting in two practical optimization algorithms, LPSGD and LPSGDM, which are detailed in Subsections~\ref{method-2} and \ref{method-3}, respectively.

\subsection{Motivation}
\label{method-1}
Under the \(\ell_p\)-smoothness model, the steepest descent approach can be characterized as the minimizer of a local upper bound of the objective:
\begin{equation}
\theta_{t+1}
=
\arg\min_{\theta \in \mathbb{R}^d}
\left(
\langle \nabla f_t,\theta-\theta_t\rangle
+
\frac{L}{2}\|\theta-\theta_t\|_p^2
\right).
\label{eq:sd_lp_1}
\end{equation}
Since multiplying \eqref{eq:sd_lp_1} by the positive constant \(1/L\) does not change its minimizer, the above problem is equivalently written as
\begin{equation}
\theta_{t+1}
=
\arg\min_{\theta \in \mathbb{R}^d}
\left(
\left\langle \frac{1}{L}\nabla f_t,\theta-\theta_t \right\rangle
+
\frac{1}{2}\|\theta-\theta_t\|_p^2
\right).
\label{eq:sd_lp_2}
\end{equation}
In practice, replacing the full gradient by a stochastic gradient \(g_t\) and absorbing \(1/L\) into the step size \(\eta\), we consider the following stochastic \(\ell_p\)-norm update:
\begin{equation}
\theta_{t+1}
=
\arg\min_{\theta \in \mathbb{R}^d}
\left(
\langle \eta g_t,\theta-\theta_t\rangle
+
\frac{1}{2}\|\theta-\theta_t\|_p^2
\right).
\label{eq:sd_lp_3}
\end{equation}

The minimizer of \eqref{eq:sd_lp_3} admits a closed-form expression. Specifically, for each coordinate $i$, we have
\begin{equation}
\theta_{t+1}^{(i)}
=
\theta_t^{(i)}
-\eta \|g_t\|_q^{\frac{p-2}{p-1}}
\frac{g_t^{(i)}}{|g_t^{(i)}|^{\frac{p-2}{p-1}}},
\label{core}
\end{equation}

Further, inspired by the closed-form $\ell_p$ steepest descent update in \cite{stacey}, we define the following coordinate-wise scaling operator:
\begin{equation}
\label{lpsd}
\theta_{t+1}^{(i)}
=
\theta_t^{(i)}
-\eta 
\frac{g_t^{(i)}}{(|g_t^{(i)}|+\varepsilon)^{\frac{p-2}{p-1}}},
\end{equation}
where the factor \(\|g_t\|_q^{\frac{p-2}{p-1}}\) in \eqref{core} is a positive scalar common to all coordinates. As such, it only controls the overall update magnitude, but does not alter the coordinate-wise relative scaling induced by the \(\ell_p\) geometry. Accordingly, we retain the coordinate-wise scaling induced by the \(\ell_p\) geometry and incorporate the global factor into the learning rate. Here, \(\varepsilon>0\) is a small stability constant introduced to avoid singular behavior near zero and to improve analytical tractability in the convergence analysis.

 To comprehend the numerical behavior of different norm geometries under curvature anisotropy, we conduct experiments on the following local quadratic approximation for \(\theta \in \mathbb{R}^d\):

\begin{equation}\label{eq:quad}
f(\theta) = \frac{1}{2}\,\theta^\top H \theta.
\end{equation}
Without loss of generality, we work in the eigenbasis of the local Hessian so that the Hessian matrix becomes a diagonal matrix, i.e., $H=\mathrm{diag}(\lambda^{(1)},\dots,\lambda^{(d)})$. We further consider the anisotropic regime $\lambda^{(1)} \gg \lambda^{(2)} \ge \cdots \ge 0$, where the leading curvature direction is significantly sharper than the remaining directions.

Notice that the gradient of the model \eqref{eq:quad} is $\nabla f(\theta)=H\theta$, i.e.
\begin{equation}\label{eq:grad}
(\nabla f(\theta))^{(i)} = \lambda^{(i)} \theta^{(i)}.
\end{equation}

\begin{figure}[!t]
\centering
\subfloat[]{\includegraphics[width=3.5in]{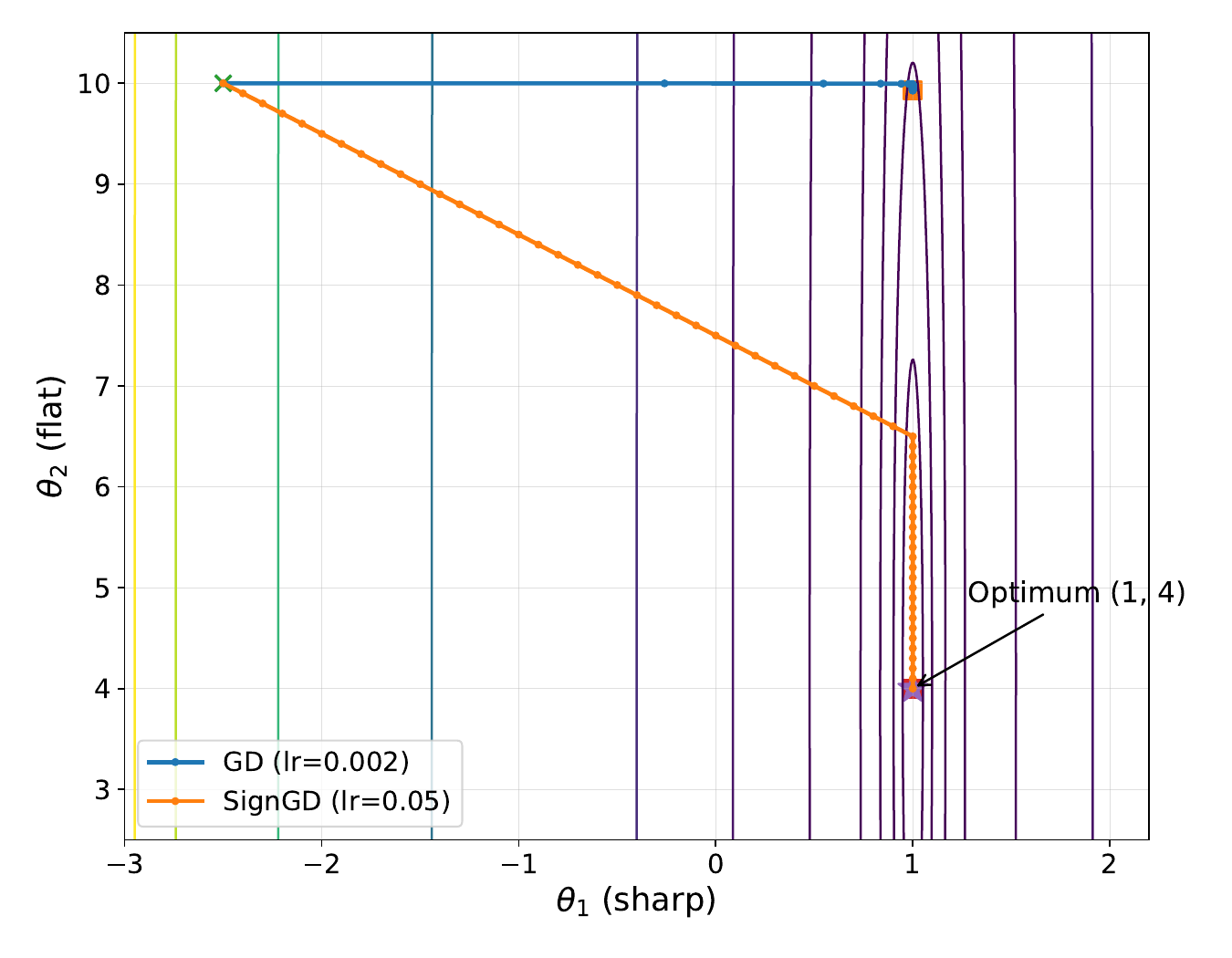}%
\label{fig_3_a}}
\hfil
\subfloat[]{\includegraphics[width=3.5in]{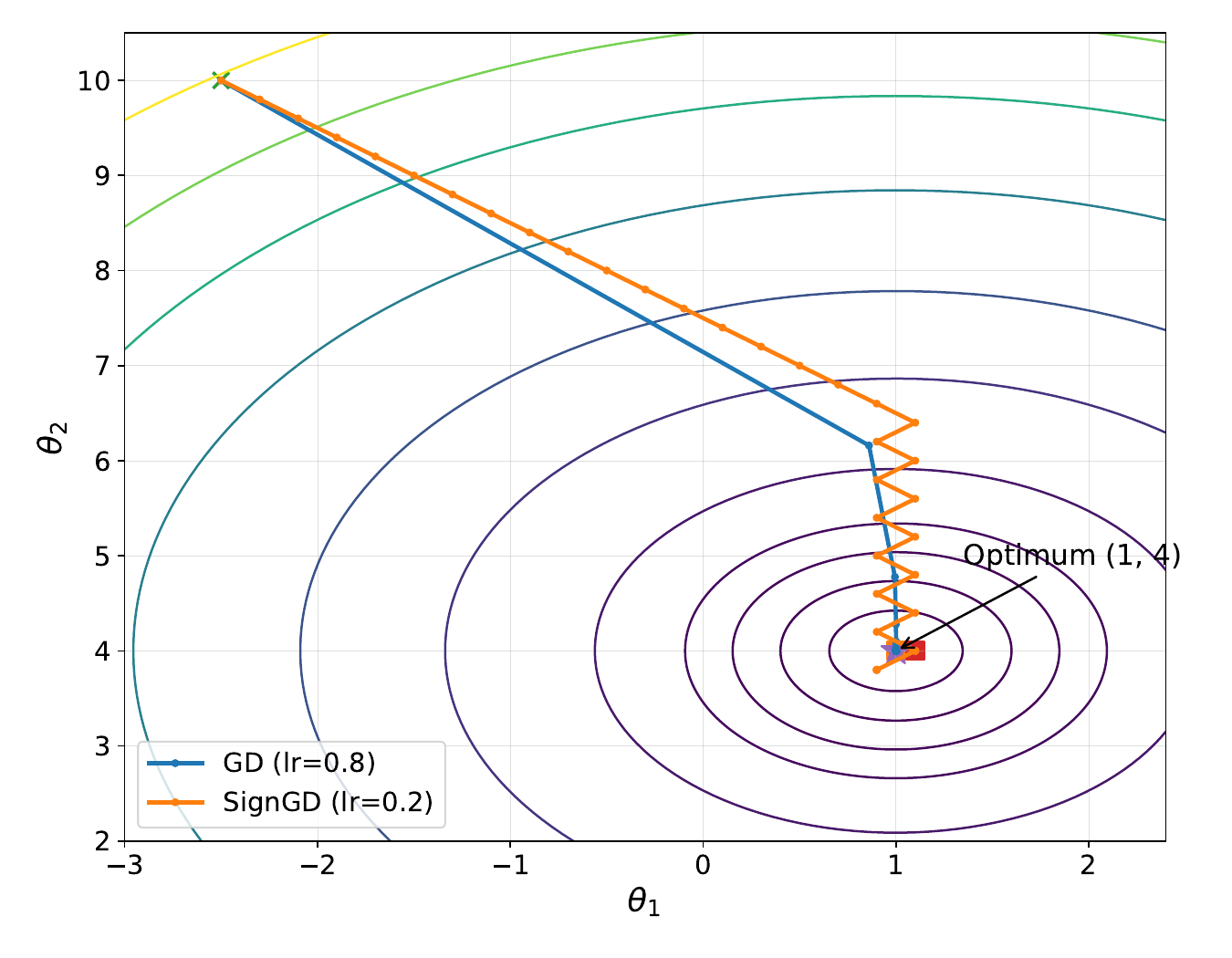}%
\label{fig_3_b}}
\caption{Optimization trajectories of GD and SignGD on two toy loss landscapes. (a) Trajectories of GD and SignGD under strong curvature anisotropy. (b) Trajectories of GD and SignGD under weak curvature anisotropy.}
\label{fig_3}
\end{figure}

When \(p=2\) (Euclidean geometry), the update magnitude along each coordinate satisfies
\begin{equation}
|d^{(i)}| \propto |(\nabla f(\theta))^{(i)}|=\lambda^{(i)}|\theta^{(i)}|.
\end{equation}
Hence,
\begin{equation}
\label{yz-1}
\frac{|d^{(i)}|}{|d^{(j)}|}
=
\frac{\lambda^{(i)}|\theta^{(i)}|}{\lambda^{(j)}|\theta^{(j)}|}.
\end{equation}
Therefore, as seen from \eqref{yz-1}, Euclidean geometry preserves directional imbalance linearly, and high-curvature directions can dominate the update.

Further, for \(2<p<\infty\), it follows from \eqref{lpsd} that the update magnitude admits a power-type rescaling, i.e.,
\begin{equation}
    |d^{(i)}| \propto |(\nabla f(\theta))^{(i)}|^\alpha \propto \bigl(\lambda^{(i)} |\theta^{(i)}|\bigr)^\alpha,
\end{equation}
where $\alpha=\frac{1}{p-1}$. Note that
when $\alpha<1$, the directional imbalance is compressed from linear to sublinear scaling. Consequently, the dominance of outlier curvature directions is weakened.

As $p\to\infty$, we have $\alpha\to 0$, and the update scheme of the optimizer approaches a sign-based form that depends only on the coordinate-wise signs of the gradient. This maximally suppresses imbalance of the update magnitude
along each coordinate, but also discards useful scale information.

To provide intuition, we consider two quadratic toy objectives. 
For the strongly anisotropic case, we use
\begin{equation}
   f_{\mathrm{strong}}(\theta^{(1)},\theta^{(2)})=\frac{200}{2}(\theta^{(1)}-1)^2+\frac{0.05}{2}(\theta^{(2)}-4)^2, 
\end{equation}
whose Hessian is $\mathrm{diag}(200,0.05)$. 
For the weakly anisotropic case, we use
\begin{equation}
    f_{\mathrm{weak}}(\theta^{(1)},\theta^{(2)})=\frac{1.2}{2}(\theta^{(1)}-1)^2+\frac{0.8}{2}(\theta^{(2)}-4)^2,
\end{equation}
whose Hessian is $\mathrm{diag}(1.2,0.8)$ and is close to isotropic.
We compare GD and signGD on both objectives and the corresponding optimization trajectories are presented in Fig.~\ref{fig_3}. In Fig.~\ref{fig_3_a}, which corresponds to the strongly anisotropic case, GD is restricted by the sharp direction and thus makes slow progress along the flat direction, whereas signGD traverses the flat direction more effectively because its updates are not scaled by gradient magnitude. In Fig.~\ref{fig_3_b}, which corresponds to the weakly anisotropic case, GD is better matched to the local geometry and converges smoothly to the optimum, while signGD oscillates near the minimizer due to its fixed coordinate-wise step size.

The above results from Fig. \ref{fig_3} indicate that different training stages may favor different norm geometries: a larger \(\ell_p\)-norm is preferable in the early stage when curvature anisotropy is strong, whereas a norm closer to \(\ell_2\) is more suitable in the later stage when the local curvature becomes flatter. This observation provides a key motivation for introducing cosine \(\ell_p\)-norm scheme.

\subsection{LPSGD}
\label{method-2}
We introduce a cosine \(\ell_p\)-norm scheme that gradually anneals the norm geometry from a larger norm toward the Euclidean norm. Specifically, at epoch \(s\), we set
\begin{equation}
p_s=2 + (p_{\max}-2)\cdot
\frac{1+\cos\!\left(\pi \frac{s-1}{S-1}\right)}{2},
\label{eq:p_schedule}
\end{equation}
where $p_{\max}\ge 2$ is the maximum norm parameter and $S$ is the total number of training epochs. This schedule starts from $p_{\max}$ at the beginning of training and smoothly decays to $2$ at the final epoch. Following, we will equip SGD and SGDM with the cosine $\ell_p$-norm scheme, respectively.

It is worth pointing out that to match the closed-form $\ell_p$ steepest-descent scaling, we define the epoch-dependent exponent
\begin{equation}
\rho_s
=
\frac{p_s-2}{p_s-1}.
\label{eq:rho_schedule}
\end{equation}
Given a stochastic gradient \(g_t \in \mathbb{R}^d\), we define the update direction through the nonlinear scaling map
\begin{equation}
\label{eq:ut}
u_t = \frac{g_t}{\left(|g_t|+\varepsilon\right)^{\rho_s}},
\end{equation}
The parameters in SGD are then updated as
\begin{equation}
\theta_{t+1} = \theta_t - \eta u_t,
\end{equation}
where \(\eta\) denotes the learning rate.

As discussed above, we are now ready to describe our first optimizer, LPSGD, in Algorithm \ref{LPSGD}.

\begin{algorithm}[t]
\caption{LPSGD}
\label{LPSGD}
\begin{algorithmic}[1]
\REQUIRE Initial parameter $\theta_0$, learning rate $\eta$, stability constant $\varepsilon>0$, maximum norm parameter $p_{\max}\ge 2$, total epochs $S$, training set $\mathcal{D}$
\ENSURE Final parameter $\theta$
\FOR{$s=1$ to $S$}
    \STATE $p_s \gets 2 + (p_{\max}-2)\cdot \dfrac{1+\cos\!\left(\pi \dfrac{s-1}{S-1}\right)}{2}$
    \STATE $\rho_s \gets \dfrac{p_s-2}{p_s-1}$
    \FOR{each mini-batch $\mathcal{B}_t \subset \mathcal{D}$}
        \STATE $g_t \leftarrow \nabla f_{\mathcal{B}_t}(\theta_t)$
        \STATE $u_t \gets \dfrac{g_t}{\left(|g_t|+\varepsilon\right)^{\rho_s}}$
        \STATE $\theta_{t+1} \gets \theta_t - \eta u_t$
    \ENDFOR
\ENDFOR
\end{algorithmic}
\end{algorithm}

\begin{remark} A few necessary explanations for LPSGD (Algorithm~\ref{LPSGD}) are described here.

\begin{itemize}

\item As mentioned above, the study \cite{stacey} also introduced $\ell_p$-norm into a new accelerated $\ell_p$ steepest descent algorithm, called STACEY. However, the authors used a fixed norm scheme, i.e., $p$ is a fixed constant. Such operation may be inadvisable for large models. In contrast,  LPSGD (Algorithm~\ref{LPSGD}) works with a dynamical norm scheme by introducing a cosine \(\ell_p\)-norm scheme.

\item Further, in recent work \cite{yuan-1}, the authors develop the gradual stochastic gradient descent (GSGD) algorithm, enabling the optimizer to smoothly transition from sign-based SGD in the early phase to standard SGD at the end by varying the optimizer from signSGD (equivalent to using the \(\ell_\infty\)-norm) to SGD (equivalent to using the \(\ell_2\)-norm), with \(p=\frac{t-1}{T}\), where \(T\) denotes the total number of iterations. However, GSGD is relatively less flexible, as the initial norm is implicitly fixed by its scheduling rule. By contrast, our LPSGD introduces a tunable parameter \(p_{\max}\), allowing the optimizer to start from a more suitable initial norm geometry for different models, thereby providing greater adaptability.

\item Looking back to SGD, we have that the computational cost of LPSGD (Algorithm \ref{LPSGD}) is almost similar to that of vanilla SGD. The only additional computation is the element-wise rescaling, whose cost is \(\mathcal{O}(d)\) for a \(d\)-dimensional parameter vector.

\end{itemize}

\end{remark}

\subsection{LPSGDM}
\label{method-3}
In practical deep learning training, momentum and weight decay are often crucial for improving optimization stability and generalization \cite{momentum1, momentum2}. We therefore extend Algorithm~\ref{LPSGD} to a more practical optimizer by incorporating both momentum and weight decay, thereby obtaining Algorithm \ref{LPSGDM} (LPSGDM).

At epoch $s$, the norm parameter $p_s$ is determined by the scheme in \eqref{eq:p_schedule}.
Given the stochastic gradient $g_t$, we first update the momentum buffer as
\begin{equation}
m_t = \beta m_{t-1} + (1-\beta) g_t,
\label{eq:momentum_buffer}
\end{equation}
where $\beta \in [0,1)$ denotes the momentum coefficient. We then apply the scheduled coordinate-wise rescaling to the momentum-smoothed gradient:
\begin{equation}
v_t = \frac{m_t}{\left(|m_t|+\varepsilon\right)^{\rho_s}},
\label{eq:momentum_scaled_update}
\end{equation}
where \(\varepsilon>0\) is the same stability constant as in the previous subsection.

To incorporate weight decay, we adopt the decoupled form, which applies shrinkage directly to the parameters rather than coupling it into the stochastic gradient. The resulting update is
\begin{equation}
\theta_{t+1} = (1-\eta\lambda)\theta_t - \eta v_t,
\label{eq:practical_update}
\end{equation}
where $\lambda \ge 0$ is the weight decay coefficient.

As discussed above, we are now ready to describe our second optimizer, LPSGDM, in Algorithm \ref{LPSGDM}.

\begin{algorithm}[!hpbt]
\caption{LPSGDM}
\label{LPSGDM}
\begin{algorithmic}[1]
\REQUIRE Initial parameter $\theta_0$, learning rate $\eta$, momentum coefficient $\beta\in[0,1)$, weight decay coefficient $\lambda\ge 0$, stability constant $\varepsilon>0$, maximum norm parameter $p_{\max}\ge 2$, total epochs $S$, training set $\mathcal{D}$
\ENSURE Final parameter $\theta$
\STATE Initialize momentum buffer $m_0 \gets 0$
\FOR{$s=1$ to $S$}
    \STATE $p_s \gets 2 + (p_{\max}-2)\cdot \dfrac{1+\cos\!\left(\pi \dfrac{s-1}{S-1}\right)}{2}$
    \STATE $\rho_s \gets \dfrac{p_s-2}{p_s-1}$
    \FOR{each mini-batch $\mathcal{B}_t \subset \mathcal{D}$}
        \STATE $g_t \leftarrow \nabla f_{\mathcal{B}_t}(\theta_t)$
        \STATE $m_t \gets \beta m_{t-1} + (1-\beta) g_t$
        \STATE $u_t \gets \dfrac{m_t}{\left(|m_t|+\varepsilon\right)^{\rho_s}}$
        \STATE $\theta_{t+1} \gets (1-\eta\lambda)\theta_t - \eta u_t$
    \ENDFOR
\ENDFOR
\end{algorithmic}
\end{algorithm}

\begin{remark}
Comparing LPSGD (Algorithm \ref{LPSGD}) and LPSGDM (Algorithm \ref{LPSGDM}), we have that the main difference between these two optimizers is that the latter introduces the momentum scheme. Specifically, LPSGDM accumulates historical gradient information through the momentum term, which can smooth the optimization trajectory.

\end{remark}

\section{Convergence Analysis}

In this section, we investigate the convergence behavior of the proposed methods in the nonconvex stochastic setting. We first establish the convergence guarantee for LPSGD (Algorithm~\ref{LPSGD}) in Subsection~\ref{convergence_analysis_1}. We then analyze the momentum-based extension without weight decay in Subsection~\ref{convergence_analysis_2}. Finally, the effect of the weight decay term is discussed separately in Subsection~\ref{convergence_analysis_3}.

\subsection{Convergence of LPSGD}
\label{convergence_analysis_1}

Let \(\{\theta_t\}_{t\ge 0}\) be the sequence generated by Algorithm~\ref{LPSGD} (LPSGD). 
Since the norm parameter is updated at the epoch level, let \(s(t)\) denote the epoch index corresponding to iteration \(t\). 
Accordingly, the active norm parameter, its dual exponent, and the associated rescaling exponent at iteration \(t\) are \(p_{s(t)}\), \(q_{s(t)}\), and \(\rho_{s(t)}\), respectively. 
The LPSGD update can be written as
\begin{equation}
u_t=\frac{g_t}{\left(|g_t|+\varepsilon\right)^{\rho_{s(t)}}}, 
\qquad
\theta_{t+1}=\theta_t-\eta u_t,
\label{eq:lpsgd_update_analysis}
\end{equation}
where \(\eta>0\) is the step size. 
Assumptions~\ref{ass:lpsmooth}--\ref{ass:lower} characterize the objective function and the stochastic gradient oracle. 
To handle the nonlinear coordinate-wise transformation induced by the scheduled \(\ell_p\)-geometry, we further impose the following condition on the transformed update direction.

\begin{assumption}[Conditional alignment and bounded transformed second moment]
\label{ass:direction}
There exist constants \(c_1,c_2,c_3>0\) such that, for any iteration \(t\),
\begin{equation}
\mathbb{E}\!\left[\left\langle \nabla f(\theta_t),u_t\right\rangle \mid \theta_t\right]
\ge c_1 \|\nabla f(\theta_t)\|_2^2,
\label{eq:alignment_assumption}
\end{equation}
and
\begin{equation}
\mathbb{E}\!\left[\|u_t\|_{p_{s(t)}}^2 \mid \theta_t\right]
\le c_2 + c_3 \|\nabla f(\theta_t)\|_2^2.
\label{eq:transformed_second_moment}
\end{equation}
\end{assumption}
Assumption~\ref{ass:direction} is needed because LPSGD updates the parameters using the transformed direction \(u_t\) rather than the stochastic gradient \(g_t\). Owing to the nonlinear coordinate-wise rescaling, the unbiasedness of \(g_t\) alone is not sufficient for the convergence analysis. Condition \eqref{eq:alignment_assumption} guarantees the descent capability of \(u_t\), condition \eqref{eq:transformed_second_moment} controls its size for bounding the smoothness term.
The following lemma establishes a one-step descent inequality for LPSGD.

\begin{lemma}
\label{lem:lpsgd_descent}
Under Assumption~\ref{ass:lpsmooth}, for any iteration \(t \ge 0\), the iterates generated by \eqref{eq:lpsgd_update_analysis} satisfy
\begin{equation}
f(\theta_{t+1})
\le
f(\theta_t)
-\eta \left\langle \nabla f(\theta_t),u_t\right\rangle
+\frac{L\eta^2}{2}\|u_t\|_{p_{s(t)}}^2.
\label{eq:one_step_descent}
\end{equation}
\end{lemma}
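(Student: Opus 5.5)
The plan is to apply the $\ell_p$-smoothness inequality of Assumption~\ref{ass:lpsmooth} directly, with $x=\theta_t$ and $y=\theta_{t+1}$, and then substitute the LPSGD update \eqref{eq:lpsgd_update_analysis}. Since $\theta_{t+1}-\theta_t=-\eta u_t$, the linear term becomes
\[
\langle \nabla f(\theta_t),\theta_{t+1}-\theta_t\rangle=-\eta\langle \nabla f(\theta_t),u_t\rangle .
\]
By absolute homogeneity of the norm, the quadratic term becomes
\[
\frac{L}{2}\|\theta_{t+1}-\theta_t\|_{p}^2=\frac{L}{2}\eta^2\|u_t\|_{p}^2 .
\]
Adding these to $f(\theta_t)$ gives exactly \eqref{eq:one_step_descent}. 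No expectation is taken and no property of $u_t$ is used, which is why only Assumption~\ref{ass:lpsmooth} is invoked. The bound holds pathwise for every realization of $g_t$.

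The one point needing care is which $p$ appears in Assumption~\ref{ass:lpsmooth}. The lemma uses the iteration-dependent exponent $p_{s(t)}$, while the assumption is written for a generic $p$. I would read Assumption~\ref{ass:lpsmooth} as holding, with the same constant $L$, for every $p$ in the scheduled range $[2,p_{\max}]$, and then instantiate it at $p=p_{s(t)}$.

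If one instead insists that the assumption holds only for $p=2$, a fallback is available. For $p\ge 2$ we have $\|x\|_p\le\|x\|_2$, so an $\ell_2$ bound does not directly give the $\ell_{p}$ bound. In that case one uses $\|x\|_2\le d^{1/2-1/p}\|x\|_p$ and absorbs the factor $d^{1-2/p_{\max}}$ into $L$. This keeps a uniform constant over all epochs.

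I expect this uniformity-in-$p$ issue to be the only real obstacle. I would state the interpretation explicitly before the two-line computation.
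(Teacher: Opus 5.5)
Your proof is correct and follows the same route as the paper: instantiate Assumption~\ref{ass:lpsmooth} at the active norm $\|\cdot\|_{p_{s(t)}}$, tacitly with the same $L$ for every scheduled $p$, and substitute $\theta_{t+1}-\theta_t=-\eta u_t$. Your explicit remark on uniformity in $p$, and the H\"older fallback with the modified constant $L d^{1-2/p_{\max}}$, make precise an interpretation the paper leaves implicit.
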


\begin{proof}
By Assumption~\ref{ass:lpsmooth}, applied with the active norm \(\|\cdot\|_{p_{s(t)}}\), we have
\[
f(\theta_{t+1})
\le
f(\theta_t)
+\left\langle \nabla f(\theta_t),\theta_{t+1}-\theta_t\right\rangle
+\frac{L}{2}\|\theta_{t+1}-\theta_t\|_{p_{s(t)}}^2.
\]
Substituting \(\theta_{t+1}-\theta_t=-\eta u_t\) yields
\[
f(\theta_{t+1})
\le
f(\theta_t)
-\eta \left\langle \nabla f(\theta_t),u_t\right\rangle
+\frac{L\eta^2}{2}\|u_t\|_{p_{s(t)}}^2.
\]
This completes the proof.
\end{proof}

We are now ready to state the main convergence result for LPSGD.

\begin{theorem}
\label{thm:lpsgd}
Suppose that Assumptions~\ref{ass:lpsmooth}--\ref{ass:lower} and Assumption~\ref{ass:direction} hold. 
Let \(\{\theta_t\}_{t=0}^{T-1}\) be generated by LPSGD with a constant step size \(\eta>0\). 
If
\begin{equation}
0<\eta\le \frac{c_1}{L c_3},
\label{eq:stepsize_condition}
\end{equation}
then
\begin{equation}
\frac{1}{T}\sum_{t=0}^{T-1}\mathbb{E}\|\nabla f(\theta_t)\|_2^2
\le
\frac{2\bigl(f(\theta_0)-f^\star\bigr)}{\eta c_1 T}
+\frac{L\eta c_2}{c_1}.
\label{eq:lpsgd_main_bound}
\end{equation}
Consequently, by choosing
\begin{equation}
\eta
=
\min\left\{
\frac{c_1}{L c_3},
\sqrt{\frac{2\bigl(f(\theta_0)-f^\star\bigr)}{L c_2 T}}
\right\},
\label{eq:lpsgd_stepsize_choice}
\end{equation}
we obtain
\begin{equation}
\frac{1}{T}\sum_{t=0}^{T-1}\mathbb{E}\|\nabla f(\theta_t)\|_2^2
=
\mathcal{O}(T^{-1/2}).
\label{eq:lpsgd_rate}
\end{equation}
\end{theorem}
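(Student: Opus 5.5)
Take conditional expectations in the one-step descent inequality of Lemma~\ref{lem:lpsgd_descent}, given $\theta_t$. Applying Assumption~\ref{ass:direction} to both the inner-product term and the squared-norm term gives
\[
\mathbb{E}[f(\theta_{t+1})\mid\theta_t]\le f(\theta_t)-\eta c_1\|\nabla f(\theta_t)\|_2^2+\frac{L\eta^2}{2}\bigl(c_2+c_3\|\nabla f(\theta_t)\|_2^2\bigr).
\]
Collect the gradient terms into the coefficient $-\eta\bigl(c_1-\tfrac{L\eta c_3}{2}\bigr)$. The step-size condition \eqref{eq:stepsize_condition} gives $L\eta c_3\le c_1$, so this coefficient is at most $-\eta c_1/2$. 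This yields
\[
\mathbb{E}[f(\theta_{t+1})\mid\theta_t]\le f(\theta_t)-\frac{\eta c_1}{2}\|\nabla f(\theta_t)\|_2^2+\frac{L\eta^2c_2}{2}.
\]

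Next, take total expectations via the tower property and sum from $t=0$ to $T-1$. The function values telescope to $f(\theta_0)-\mathbb{E}f(\theta_T)$, which Assumption~\ref{ass:lower} bounds above by $f(\theta_0)-f^\star$. Multiplying through by $2/(\eta c_1 T)$ gives exactly \eqref{eq:lpsgd_main_bound}: the first term is $2(f(\theta_0)-f^\star)/(\eta c_1T)$ and the second is $L\eta c_2/c_1$. Note that only Assumptions~\ref{ass:lpsmooth}, \ref{ass:lower} and \ref{ass:direction} are actually used. Assumptions~\ref{ass:unbiased}--\ref{ass:noise} are subsumed by Assumption~\ref{ass:direction}.

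For the rate, write $\Delta=f(\theta_0)-f^\star$ and $\eta_0=\sqrt{2\Delta/(Lc_2T)}$, and split into two cases according to which term attains the minimum in \eqref{eq:lpsgd_stepsize_choice}.
\begin{itemize}
\item If $\eta=\eta_0$, both terms of \eqref{eq:lpsgd_main_bound} equal $\sqrt{2\Delta Lc_2/T}/c_1$. The bound is therefore $2\sqrt{2\Delta Lc_2}/(c_1\sqrt T)$.
\item If $\eta=c_1/(Lc_3)<\eta_0$, the first term is $2\Delta Lc_3/(c_1^2T)=O(T^{-1})$. The second term is at most $L\eta_0c_2/c_1=O(T^{-1/2})$.
\end{itemize}
In either case the sum is $O(T^{-1/2})$, which proves \eqref{eq:lpsgd_rate}.

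The only delicate point is the use of Lemma~\ref{lem:lpsgd_descent} with a time-varying norm $p_{s(t)}$. Assumption~\ref{ass:lpsmooth} has to be read as holding, with the same $L$, for every $p$ in the schedule range $[2,p_{\max}]$. Taking that reading as given (the lemma already does so), the rest is routine bookkeeping. I expect no real obstacle beyond tracking the factor $1/2$ coming from the step-size condition.
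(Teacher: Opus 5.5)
Your proposal is correct and follows the paper's proof essentially line for line. Like the paper, you take the conditional expectation of the one-step descent inequality, apply Assumption~\ref{ass:direction} to both terms, and use the step-size condition to get the coefficient $\eta c_1/2$; you then use the tower property, telescope, and invoke Assumption~\ref{ass:lower}. Your two-case evaluation under the step size \eqref{eq:lpsgd_stepsize_choice} just spells out the substitution the paper calls direct. Your remarks that Assumptions~\ref{ass:unbiased}--\ref{ass:noise} are never used, and that Assumption~\ref{ass:lpsmooth} must hold with a common $L$ across the scheduled $p$, match how the paper applies Lemma~\ref{lem:lpsgd_descent}.
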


\begin{proof}
Taking conditional expectation on both sides of \eqref{eq:one_step_descent} gives
\begin{align}
\mathbb{E}\!\left[f(\theta_{t+1})\mid \theta_t\right]
&\le
f(\theta_t)
-\eta\,\mathbb{E}\!\left[\left\langle \nabla f(\theta_t),u_t\right\rangle\mid \theta_t\right] \notag\\
&\quad
+\frac{L\eta^2}{2}\mathbb{E}\!\left[\|u_t\|_{p_{s(t)}}^2\mid \theta_t\right].
\label{eq:conditional_descent}
\end{align}
Applying Assumption~\ref{ass:direction}, we further obtain
\begin{align}
\mathbb{E}\!\left[f(\theta_{t+1})\mid \theta_t\right]
&\le
f(\theta_t)
-\eta c_1 \|\nabla f(\theta_t)\|_2^2 \notag\\
&\quad
+\frac{L\eta^2}{2}\Bigl(c_2+c_3\|\nabla f(\theta_t)\|_2^2\Bigr) \notag\\
&=
f(\theta_t)
-\left(\eta c_1-\frac{L\eta^2 c_3}{2}\right)\|\nabla f(\theta_t)\|_2^2 \notag\\
&\quad
+\frac{L\eta^2 c_2}{2}.
\label{eq:before_stepsize}
\end{align}
Under the step-size condition \eqref{eq:stepsize_condition}, we have
\[
\eta c_1-\frac{L\eta^2 c_3}{2}
\ge
\frac{\eta c_1}{2}.
\]
Hence,
\begin{equation}
\mathbb{E}\!\left[f(\theta_{t+1})\mid \theta_t\right]
\le
f(\theta_t)
-\frac{\eta c_1}{2}\|\nabla f(\theta_t)\|_2^2
+\frac{L\eta^2 c_2}{2}.
\label{eq:descent_after_stepsize}
\end{equation}
Taking full expectation on both sides of \eqref{eq:descent_after_stepsize} yields
\begin{equation}
\mathbb{E}\!\left[f(\theta_{t+1})\right]
\le
\mathbb{E}\!\left[f(\theta_t)\right]
-\frac{\eta c_1}{2}\,\mathbb{E}\!\left[\|\nabla f(\theta_t)\|_2^2\right]
+\frac{L\eta^2 c_2}{2}.
\label{eq:full_expectation_descent}
\end{equation}
Summing \eqref{eq:full_expectation_descent} from \(t=0\) to \(T-1\), we obtain
\begin{equation}
\mathbb{E}[f(\theta_T)]
\le
f(\theta_0)
-\frac{\eta c_1}{2}\sum_{t=0}^{T-1}\mathbb{E}[\|\nabla f(\theta_t)\|_2^2]
+\frac{L\eta^2 c_2 T}{2}.
\label{eq:telescope}
\end{equation}
By Assumption~\ref{ass:lower}, \(f(\theta_T)\ge f^\star\). Therefore,
\begin{equation}
    \frac{\eta c_1}{2}\sum_{t=0}^{T-1}\mathbb{E}[\|\nabla f(\theta_t)\|_2^2]
\le
f(\theta_0)-f^\star+\frac{L\eta^2 c_2 T}{2}.
\end{equation}
Dividing both sides by \(\eta c_1 T/2\) gives
\begin{equation}
    \frac{1}{T}\sum_{t=0}^{T-1}\mathbb{E}[\|\nabla f(\theta_t)\|_2^2]
\le
\frac{2\bigl(f(\theta_0)-f^\star\bigr)}{\eta c_1 T}
+\frac{L\eta c_2}{c_1},
\end{equation}
which proves \eqref{eq:lpsgd_main_bound}. 
Finally, substituting the step size in \eqref{eq:lpsgd_stepsize_choice} into \eqref{eq:lpsgd_main_bound} directly yields
\[
\frac{1}{T}\sum_{t=0}^{T-1}\mathbb{E}[\|\nabla f(\theta_t)\|_2^2]
=
\mathcal{O}(T^{-1/2}),
\]
thereby completing the proof.
\end{proof}

\subsection{Convergence of the Momentum-Based Extension}
\label{convergence_analysis_2}

We next consider the momentum-based extension of LPSGD without weight decay. 
Let \(\{\theta_t\}_{t\ge 0}\) be the sequence generated by Algorithm~\ref{LPSGDM}, and let \(\{m_t\}_{t\ge 0}\) denote the momentum sequence defined by
\begin{equation}
m_t=\beta m_{t-1}+(1-\beta)g_t,
\qquad 0\le \beta<1,
\label{eq:momentum_recursion}
\end{equation}
with \(m_{-1}=0\). 
Since the norm parameter is scheduled at the epoch level, let \(s(t)\) denote the epoch index corresponding to iteration \(t\). 
Accordingly, the active norm parameter and the associated rescaling exponent at iteration \(t\) are \(p_{s(t)}\) and \(\rho_{s(t)}\), respectively. 
Define the transformed momentum direction by
\begin{equation}
v_t=\frac{m_t}{\left(|m_t|+\varepsilon\right)^{\rho_{s(t)}}},
\qquad
\theta_{t+1}=\theta_t-\eta v_t,
\label{eq:lpsgdm_update_analysis}
\end{equation}
where \(\eta>0\) is the step size.

Since the update direction now depends on the momentum buffer, conditioning only on \(\theta_t\) is no longer sufficient. 
Therefore, let \(\{\mathcal{F}_t\}_{t\ge 0}\) denote the natural filtration generated by the algorithmic randomness up to iteration \(t\). 
To characterize the transformed momentum direction used by LPSGDM, we impose the following assumption.

\begin{assumption}[Conditional alignment and bounded transformed second moment for momentum direction]
\label{ass:momentum_direction}
There exist constants \(\tilde c_1,\tilde c_2,\tilde c_3>0\) such that, for any iteration \(t\),
\begin{equation}
\mathbb{E}\!\left[\left\langle \nabla f(\theta_t),v_t\right\rangle \mid \mathcal{F}_t\right]
\ge \tilde c_1 \|\nabla f(\theta_t)\|_2^2,
\label{eq:momentum_alignment}
\end{equation}
and
\begin{equation}
\mathbb{E}\!\left[\|v_t\|_{p_{s(t)}}^2 \mid \mathcal{F}_t\right]
\le \tilde c_2 + \tilde c_3 \|\nabla f(\theta_t)\|_2^2.
\label{eq:momentum_second_moment}
\end{equation}
\end{assumption}

Assumption~\ref{ass:momentum_direction} plays the same role as Assumption~\ref{ass:direction} in the analysis of LPSGD, but is adapted to the momentum-smoothed update direction. 
Because the actual update is generated from the transformed momentum \(v_t\) rather than the stochastic gradient \(g_t\), the unbiasedness of \(g_t\) alone does not directly imply a descent property for \(v_t\). 
Condition~\eqref{eq:momentum_alignment} guarantees that the transformed momentum direction remains sufficiently aligned with the full gradient in conditional expectation, condition \eqref{eq:momentum_second_moment} controls its magnitude for bounding the smoothness term.

The following lemma establishes a one-step descent inequality for the momentum-based method.

\begin{lemma}
\label{lem:lpsgdm_descent}
Under Assumption~\ref{ass:lpsmooth}, for any iteration \(t\ge 0\), the iterates generated by \eqref{eq:lpsgdm_update_analysis} satisfy
\begin{equation}
f(\theta_{t+1})
\le
f(\theta_t)
-\eta \left\langle \nabla f(\theta_t),v_t\right\rangle
+\frac{L\eta^2}{2}\|v_t\|_{p_{s(t)}}^2.
\label{eq:momentum_one_step_descent}
\end{equation}
\end{lemma}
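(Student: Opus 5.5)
The plan mirrors the proof of Lemma~\ref{lem:lpsgd_descent}: this is a deterministic, pathwise inequality, so neither the momentum recursion \eqref{eq:momentum_recursion} nor the filtration $\{\mathcal{F}_t\}$ enters. We use only the $\ell_p$-smoothness upper bound of Assumption~\ref{ass:lpsmooth} and the LPSGDM update rule \eqref{eq:lpsgdm_update_analysis} without weight decay.

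\textbf{Step 1.} Fix an iteration $t\ge 0$. Apply Assumption~\ref{ass:lpsmooth} with $x=\theta_t$, $y=\theta_{t+1}$, and with the active norm $\|\cdot\|_{p_{s(t)}}$. This follows the reading already adopted in Lemma~\ref{lem:lpsgd_descent}: the smoothness inequality holds with constant $L$ for every $p$ produced by the schedule \eqref{eq:p_schedule}, so for each $p_s\in[2,p_{\max}]$. This gives
\[
f(\theta_{t+1})\le f(\theta_t)+\langle \nabla f(\theta_t),\theta_{t+1}-\theta_t\rangle+\frac{L}{2}\|\theta_{t+1}-\theta_t\|_{p_{s(t)}}^2 .
\]

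\textbf{Step 2.} Substitute $\theta_{t+1}-\theta_t=-\eta v_t$ from \eqref{eq:lpsgdm_update_analysis}. Bilinearity of the inner product turns the linear term into $-\eta\langle\nabla f(\theta_t),v_t\rangle$. Absolute homogeneity of the norm turns the quadratic term into $\frac{L\eta^2}{2}\|v_t\|_{p_{s(t)}}^2$. The result is exactly \eqref{eq:momentum_one_step_descent}.

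The only point needing care is that $v_t$ is a nonlinear function of $m_t$, and $m_t$ depends on all past gradients. This causes no difficulty: the smoothness bound holds for arbitrary pairs of points, and $v_t$ is just a fixed vector once we condition on the realized trajectory. The other subtlety is using $L$ uniformly over the varying $p_{s(t)}$, which is handled by the interpretation of Assumption~\ref{ass:lpsmooth} fixed in Step~1, as in Lemma~\ref{lem:lpsgd_descent}.
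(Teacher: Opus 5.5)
Your proposal is correct and matches the paper's proof: apply Assumption~\ref{ass:lpsmooth} with the active norm $\|\cdot\|_{p_{s(t)}}$ at $x=\theta_t$, $y=\theta_{t+1}$, then substitute $\theta_{t+1}-\theta_t=-\eta v_t$. One small sharpening of your Step~1: the uniform-in-$p$ reading follows from the assumption at $p_{\max}$ alone, because $\|z\|_{p_{\max}}\le\|z\|_{p}$ for every $2\le p\le p_{\max}$.
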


\begin{proof}
By Assumption~\ref{ass:lpsmooth}, applied with the active norm \(\|\cdot\|_{p_{s(t)}}\), we have
\[
f(\theta_{t+1})
\le
f(\theta_t)
+\left\langle \nabla f(\theta_t),\theta_{t+1}-\theta_t\right\rangle
+\frac{L}{2}\|\theta_{t+1}-\theta_t\|_{p_{s(t)}}^2.
\]
Substituting \(\theta_{t+1}-\theta_t=-\eta v_t\) yields
\[
f(\theta_{t+1})
\le
f(\theta_t)
-\eta \left\langle \nabla f(\theta_t),v_t\right\rangle
+\frac{L\eta^2}{2}\|v_t\|_{p_{s(t)}}^2.
\]
This completes the proof.
\end{proof}

We are now ready to state the convergence result for the momentum-based extension.

\begin{theorem}
\label{thm:lpsgdm}
Suppose that Assumptions~\ref{ass:lpsmooth}--\ref{ass:lower} and Assumption~\ref{ass:momentum_direction} hold. 
Then there exists a constant \(\eta_0>0\) such that, for any fixed step size \(0<\eta\le \eta_0\), the iterates \(\{\theta_t\}_{t=0}^{T-1}\) generated by \eqref{eq:lpsgdm_update_analysis} satisfy
\begin{equation}
\frac{1}{T}\sum_{t=0}^{T-1}\mathbb{E}\!\left[\|\nabla f(\theta_t)\|_2^2\right]
\le
\frac{2\bigl(f(\theta_0)-f^\star\bigr)}{\eta \tilde c_1 T}
+\frac{L\eta \tilde c_2}{\tilde c_1}.
\label{eq:lpsgdm_main_bound}
\end{equation}
Consequently, by choosing
\begin{equation}
\eta=
\min\left\{
\eta_0,\,
\sqrt{\frac{2\bigl(f(\theta_0)-f^\star\bigr)}{L\tilde c_2 T}}
\right\},
\label{eq:lpsgdm_stepsize_choice}
\end{equation}
we obtain
\begin{equation}
\frac{1}{T}\sum_{t=0}^{T-1}\mathbb{E}\!\left[\|\nabla f(\theta_t)\|_2^2\right]
=
\mathcal{O}(T^{-1/2}).
\label{eq:lpsgdm_rate}
\end{equation}
One admissible choice is \(\eta_0=\tilde c_1/(L\tilde c_3)\).
\end{theorem}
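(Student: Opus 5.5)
The argument mirrors the proof of Theorem~\ref{thm:lpsgd}, with conditioning on $\theta_t$ replaced by conditioning on the filtration $\mathcal{F}_t$. The filtration is needed because $v_t$ depends on the whole history through $m_t$. Throughout, I will use the convention that $\theta_t$ is $\mathcal{F}_t$-measurable, so that $f(\theta_t)$ and $\|\nabla f(\theta_t)\|_2^2$ can be pulled out of conditional expectations given $\mathcal{F}_t$.

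\textbf{Step 1: one-step bound in conditional expectation.} I start from the one-step descent inequality \eqref{eq:momentum_one_step_descent} of Lemma~\ref{lem:lpsgdm_descent}. Taking $\mathbb{E}[\cdot\mid\mathcal{F}_t]$ of both sides and inserting the two parts of Assumption~\ref{ass:momentum_direction} gives
\begin{equation*}
\mathbb{E}[f(\theta_{t+1})\mid\mathcal{F}_t]\le f(\theta_t)-\Bigl(\eta\tilde c_1-\frac{L\eta^2\tilde c_3}{2}\Bigr)\|\nabla f(\theta_t)\|_2^2+\frac{L\eta^2\tilde c_2}{2}.
\end{equation*}

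\textbf{Step 2: fix the step size.} I take $\eta_0=\tilde c_1/(L\tilde c_3)$. For $\eta\le\eta_0$ we have $L\eta\tilde c_3\le\tilde c_1$, hence
\begin{equation*}
\eta\tilde c_1-\frac{L\eta^2\tilde c_3}{2}\ge\frac{\eta\tilde c_1}{2}.
\end{equation*}
This also establishes the closing claim of the theorem that $\eta_0=\tilde c_1/(L\tilde c_3)$ is an admissible choice.

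\textbf{Step 3: telescope.} Taking total expectation via the tower property gives
\begin{equation*}
\mathbb{E}f(\theta_{t+1})\le\mathbb{E}f(\theta_t)-\frac{\eta\tilde c_1}{2}\mathbb{E}\|\nabla f(\theta_t)\|_2^2+\frac{L\eta^2\tilde c_2}{2}.
\end{equation*}
Summing this from $t=0$ to $T-1$ and bounding $\mathbb{E}f(\theta_T)\ge f^\star$ by Assumption~\ref{ass:lower} gives
\begin{equation*}
\frac{\eta\tilde c_1}{2}\sum_{t=0}^{T-1}\mathbb{E}\|\nabla f(\theta_t)\|_2^2\le f(\theta_0)-f^\star+\frac{L\eta^2\tilde c_2T}{2}.
\end{equation*}
Dividing both sides by $\eta\tilde c_1T/2$ yields \eqref{eq:lpsgdm_main_bound}.

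\textbf{Step 4: the rate.} I substitute the step size \eqref{eq:lpsgdm_stepsize_choice} and split into two cases.
\begin{itemize}
\item If the square-root term is the minimum, both terms in \eqref{eq:lpsgdm_main_bound} equal $\sqrt{2(f(\theta_0)-f^\star)L\tilde c_2/T}/\tilde c_1$.
\item If $\eta=\eta_0$, then $\eta_0\le\sqrt{2(f(\theta_0)-f^\star)/(L\tilde c_2T)}$. The first term is $O(1/T)$, and the second term is at most the same $O(T^{-1/2})$ quantity as in the first case.
\end{itemize}
Together these give \eqref{eq:lpsgdm_rate}.

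\textbf{Main obstacle: measurability.} The only delicate point is making sure that $\theta_t$ and $\nabla f(\theta_t)$ are $\mathcal{F}_t$-measurable while $g_t$ and $m_t$ are not yet revealed. This is exactly what Assumption~\ref{ass:momentum_direction} is phrased to handle. All the correlation introduced by the momentum is absorbed into that assumption, so no separate control of $m_t-\nabla f(\theta_t)$ is needed. Assumptions~\ref{ass:unbiased} and \ref{ass:noise} are not actually invoked.
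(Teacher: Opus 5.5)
Your proposal is correct and follows the paper's proof essentially line by line. Both arguments take the conditional expectation given $\mathcal{F}_t$ of the one-step bound in Lemma~\ref{lem:lpsgdm_descent}, insert Assumption~\ref{ass:momentum_direction}, choose $\eta_0=\tilde c_1/(L\tilde c_3)$ to secure the factor $\eta\tilde c_1/2$, telescope using $f(\theta_T)\ge f^\star$, and substitute \eqref{eq:lpsgdm_stepsize_choice}; your two-case check in Step 4 just spells out the substitution that the paper states in one line.
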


\begin{proof}
Taking conditional expectation on both sides of \eqref{eq:momentum_one_step_descent} gives
\begin{align}
\mathbb{E}\!\left[f(\theta_{t+1})\mid \mathcal{F}_t\right]
&\le
f(\theta_t)
-\eta\,\mathbb{E}\!\left[\left\langle \nabla f(\theta_t),v_t\right\rangle \mid \mathcal{F}_t\right] \notag\\
&\quad
+\frac{L\eta^2}{2}\,\mathbb{E}\!\left[\|v_t\|_{p_{s(t)}}^2 \mid \mathcal{F}_t\right].
\label{eq:momentum_conditional_descent}
\end{align}
Applying Assumption~\ref{ass:momentum_direction}, we obtain
\begin{align}
\mathbb{E}\!\left[f(\theta_{t+1})\mid \mathcal{F}_t\right]
&\le
f(\theta_t)
-\eta \tilde c_1 \|\nabla f(\theta_t)\|_2^2 \notag\\
&\quad
+\frac{L\eta^2}{2}\Bigl(\tilde c_2+\tilde c_3\|\nabla f(\theta_t)\|_2^2\Bigr) \notag\\
&=
f(\theta_t)
-\left(\eta \tilde c_1-\frac{L\eta^2\tilde c_3}{2}\right)\|\nabla f(\theta_t)\|_2^2 \notag\\
&\quad
+\frac{L\eta^2\tilde c_2}{2}.
\label{eq:momentum_before_stepsize}
\end{align}
For any \(0<\eta\le \eta_0=\tilde c_1/(L\tilde c_3)\), we have
\[
\eta \tilde c_1-\frac{L\eta^2\tilde c_3}{2}
\ge
\frac{\eta \tilde c_1}{2}.
\]
Hence,
\begin{equation}
\mathbb{E}\!\left[f(\theta_{t+1})\mid \mathcal{F}_t\right]
\le
f(\theta_t)
-\frac{\eta \tilde c_1}{2}\|\nabla f(\theta_t)\|_2^2
+\frac{L\eta^2\tilde c_2}{2}.
\label{eq:momentum_after_stepsize}
\end{equation}
Taking full expectation on both sides of \eqref{eq:momentum_after_stepsize} yields
\begin{equation}
\mathbb{E}\!\left[f(\theta_{t+1})\right]
\le
\mathbb{E}\!\left[f(\theta_t)\right]
-\frac{\eta \tilde c_1}{2}\mathbb{E}\!\left[\|\nabla f(\theta_t)\|_2^2\right]
+\frac{L\eta^2\tilde c_2}{2}.
\label{eq:momentum_full_expectation}
\end{equation}
Summing \eqref{eq:momentum_full_expectation} from \(t=0\) to \(T-1\), we obtain
\begin{equation}
\mathbb{E}\!\left[f(\theta_T)\right]
\le
f(\theta_0)
-\frac{\eta \tilde c_1}{2}\sum_{t=0}^{T-1}\mathbb{E}\!\left[\|\nabla f(\theta_t)\|_2^2\right]
+\frac{L\eta^2\tilde c_2 T}{2}.
\label{eq:momentum_telescope}
\end{equation}
By Assumption~\ref{ass:lower}, \(f(\theta_T)\ge f^\star\). Therefore,
\[
\frac{\eta \tilde c_1}{2}\sum_{t=0}^{T-1}\mathbb{E}\!\left[\|\nabla f(\theta_t)\|_2^2\right]
\le
f(\theta_0)-f^\star+\frac{L\eta^2\tilde c_2 T}{2}.
\]
Dividing both sides by \(\eta \tilde c_1 T/2\) gives
\[
\frac{1}{T}\sum_{t=0}^{T-1}\mathbb{E}\!\left[\|\nabla f(\theta_t)\|_2^2\right]
\le
\frac{2\bigl(f(\theta_0)-f^\star\bigr)}{\eta \tilde c_1 T}
+\frac{L\eta \tilde c_2}{\tilde c_1},
\]
which proves \eqref{eq:lpsgdm_main_bound}. 
Finally, substituting \eqref{eq:lpsgdm_stepsize_choice} into \eqref{eq:lpsgdm_main_bound} yields \eqref{eq:lpsgdm_rate}. 
This completes the proof.
\end{proof}

\subsection{Discussion on Decoupled Weight Decay}

\label{convergence_analysis_3}

In practice, we also consider a decoupled weight decay version of the momentum-based method, whose update is given by
\begin{equation}
\theta_{t+1} = (1-\eta\lambda)\theta_t - \eta v_t,
\label{eq:lpsgdmw_update}
\end{equation}
where \(\lambda>0\) denotes the weight decay coefficient.

The decoupled weight decay term introduces an additional deterministic contraction on the parameter vector, while leaving the transformed momentum direction \(v_t\) unchanged. 
Hence, it does not modify the core geometry of the proposed dynamic \(\ell_p\)-norm update, but serves as an auxiliary regularization mechanism in practical training.

A fully rigorous treatment of decoupled weight decay would require additional assumptions to control the parameter norm along the whole trajectory. 
Since our main goal is to characterize the optimization effect of the dynamic \(\ell_p\)-norm scheduler, we focus the formal analysis on the cases without weight decay. 
Nevertheless, from an algorithmic viewpoint, decoupled weight decay is compatible with the proposed framework and can be incorporated straightforwardly in implementation.

\section{Experiments}
In this section, we assess the effectiveness of our algorithms on image classification benchmarks using networks of different capacities and depths, including VGG-11\cite{vgg11}, ResNet-18, and ResNet-50\cite{resnet}. On the small-scale benchmarks CIFAR-10 and CIFAR-100\cite{cifar}, we report results for all three networks. On the large-scale benchmark ImageNet-1K\cite{imagenet}, we further evaluate results by conducting experiments on ResNet-50. We compare our algorithms against several  state-of-the-art optimizers, including SGD w/Momentum\cite{sgdm}, AdamW\cite{adamw}, and Lion \cite{lion}.

\textbf{Models}. Several explanations for the used deep learning models will be briefly discussed here.
VGG-11 is a plain deep convolutional network with a simple stacked architecture, serving as a representative non-residual baseline. ResNet-18 is a lightweight residual network, which improves optimization and feature learning through skip connections. ResNet-50 is a deeper and higher-capacity residual model, allowing us to examine the scalability of the proposed method on more challenging architectures.

\textbf{Datasets}. CIFAR-10 and CIFAR-100 consist of $32\times 32$ natural color images with 10 and 100 classes, respectively, serving as small-resolution benchmarks. ImageNet-1K is a large-scale dataset of high-resolution natural images spanning 1000 classes, providing a more challenging setting to test scalability and generalization. To accommodate the different input resolutions and label spaces across datasets, we adapt each model to the corresponding dataset by adjusting input and output layers accordingly. In particular, we adjust the early convolutional layers according to the input resolution of each dataset, and modify the final classification layer to match the number of classes ($C \in \{10, 100, 1000\}$).

\textbf{Hyperparameter Tuning.} For SGD w/Momentum and LPSGDM, we perform grid search over the momentum coefficient $\beta_1 \in \{0, 0.9\}$, the learning rate $\eta \in \{0, 0.1, 0.05, 0.01, 0.002, 0.001, 0.0005\}$, and the weight decay $\lambda \in \{0,0.1, 0.01,0.005,0.001,0.0005,0.0001,0.00001\}$. Note that when $\beta_1=0$ and $\lambda=0$, SGD w/Momentum and LPSGDM reduce to SGD and LPSGD, respectively. Therefore, we do not report SGD and LPSGD separately in the final experimental results. 

For LPSGDM, we additionally tune the parameter \(p_{\max}\). Fig.~4 shows the testing top-1 accuracy versus training epoch under different \(p_{\max}\) settings on ResNet-18/CIFAR-10 and ResNet-50/ImageNet-1K, respectively. The results indicate that the testing top-1 accuracy does not increase monotonically with \(p_{\max}\). Instead, excessively large values of $p_{\max}$ may lead to fluctuations or even degraded optimization performance. Moreover, different architectures exhibit different degrees of curvature anisotropy, and the corresponding optimal choice of $p_{\max}$ also varies across models.

\begin{figure*}[t]
\centering
\subfloat[]{\includegraphics[width=3.5in]{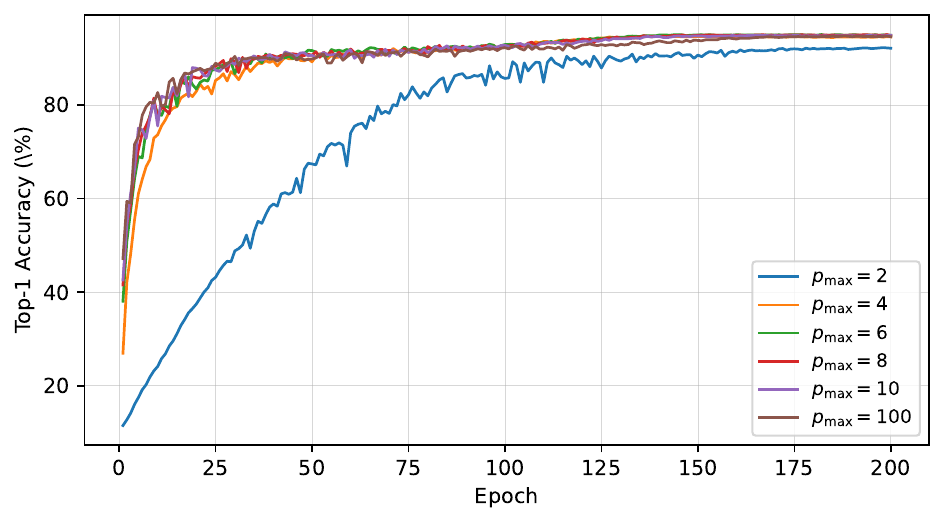}%
\label{fig_first_case}}
\hfil
\subfloat[]{\includegraphics[width=3.5in]{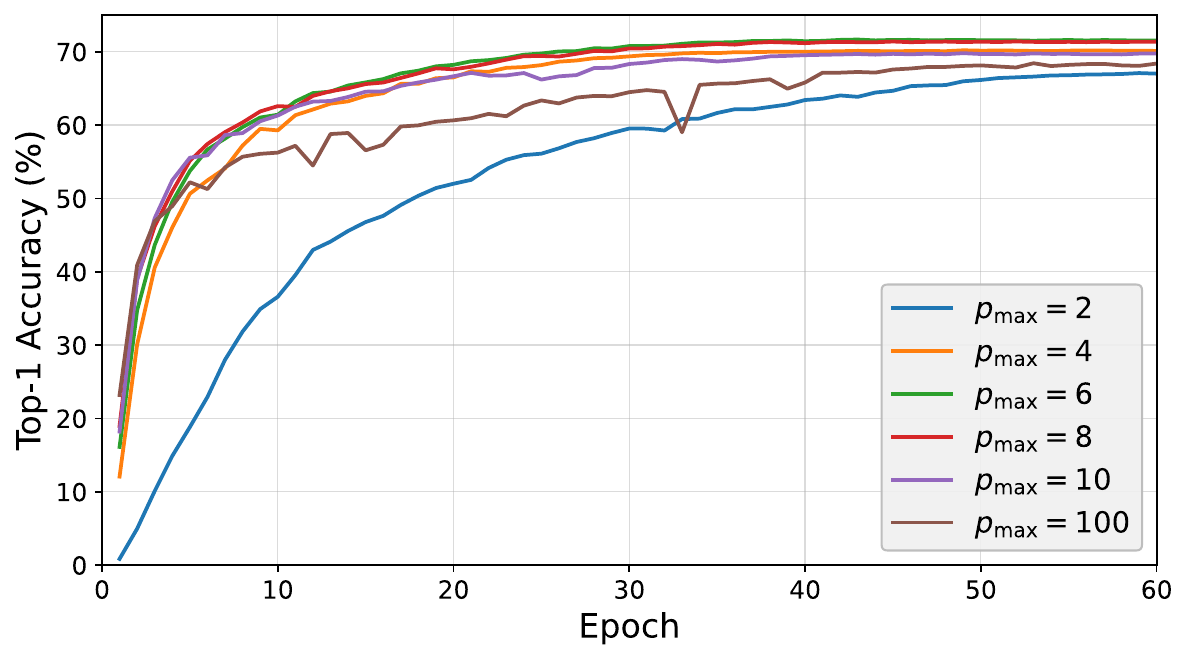}%
\label{fig_second_case}}
\caption{(a) Testing top-1 accuracy curves on ResNet-18 / cifar-10 under different \(p_{max}\) settings. (b) Testing top-1 accuracy curves on ResNet-50 / ImageNet-1K under different \(p_{max}\) settings.}
\label{fig_sim}
\end{figure*}

For AdamW, we fix $\beta_1=0.9$ and $\beta_2=0.999$, and perform grid search over the learning rate $\eta \in \{0.01, 0.005, 0.001, 0.0005, 0.0003, 0.0001\}$ and the weight decay $\lambda \in \{0.1,0.01,0.001,0.0005,0.0001,0.00001\}$. For Lion, we fix $\beta_1=0.9$ and $\beta_2=0.99$, and tune the learning rate and weight decay over the same search space as adamW.

The complete hyperparameter settings used in all experiments are provided in \ref{HYPER-PARAMETERS}.

\textbf{Result.} Tables~\ref{tab:cifar10_epochs}--\ref{tab:imagenet1k_epochs} summarize the performance of different optimizers in terms of training negative log-likelihood (NLL \(\downarrow\)) and testing top-1 accuracy (\(\uparrow\)) at selected training checkpoints. Specifically, Tables~\ref{tab:cifar10_epochs} and \ref{tab:cifar100_epochs} report the results on CIFAR-10 and CIFAR-100 at the 50th, 100th, and 200th epochs, respectively, while Table~\ref{tab:imagenet1k_epochs} reports the results on ImageNet-1K at the 20th, 40th, and 60th epochs.

Tables~\ref{tab:cifar10_epochs} and \ref{tab:cifar100_epochs} report the results on CIFAR-10 and CIFAR-100, respectively. On CIFAR-10, LPSGDM consistently delivers the strongest generalization performance across all three architectures and all evaluation checkpoints. In particular, at 200 epochs, it achieves 90.73\%, 94.81\%, and 94.84\% top-1 accuracy on VGG-11, ResNet-18, and ResNet-50, respectively, outperforming SGD w/Momentum, AdamW, and Lion in all cases. Moreover, the advantage is already visible at 50 and 100 epochs, indicating that the proposed cosine \(\ell_p\)-norm scheme improves not only the final accuracy but also the generalization behavior throughout training. On CIFAR-100, where the task is more challenging, the benefit of LPSGDM becomes even more evident in the middle and late stages. Although Lion achieves competitive accuracy at several early checkpoints, LPSGDM attains the best final top-1 accuracy on all three backbones, reaching 60.46\% on VGG-11, 77.98\% on ResNet-18, and 76.78\% on ResNet-50 at 200 epochs. In particular, on ResNet-50, LPSGDM already becomes the best method at 100 epochs with 71.45\% and further enlarges the margin at 200 epochs. These results indicate that LPSGDM yields a more favorable generalization.

Table~\ref{tab:imagenet1k_epochs} further demonstrates that LPSGDM maintains strong scalability and superior generalization on the large-scale ImageNet-1K benchmark. Although AdamW obtains the highest top-1 accuracy at the 20th epoch, LPSGDM overtakes all baselines in the middle and late stages, achieving 73.48\% at epoch 40 and 74.04\% at epoch 60 on ResNet-50. These results surpass SGD w/Momentum, AdamW, and Lion by clear margins. In particular, the improvement over AdamW reaches 2.83 percentage points at epoch 40 and remains 2.25 percentage points at epoch 60. It is also worth noting that LPSGDM does not always correspond to the smallest training NLL, yet it still produces the highest testing accuracy. This suggests that the proposed cosine \(\ell_p\)-norm scheme improves the generalization quality of the learned solution rather than simply driving down the training loss. Overall, the results in Tables~I--III consistently show that LPSGDM delivers strong generalization performance across different datasets and network architectures.

\begin{table}[t]
\centering
\caption{Image classification on CIFAR-10 at the 50th, 100th, and 200th epochs.}
\label{tab:cifar10_epochs}

\setlength{\tabcolsep}{4pt}
\renewcommand{\arraystretch}{1.1}
\makebox[\textwidth][c]{%
\resizebox{1.25\textwidth}{!}{%
\begin{tabular}{c|c|ccc|ccc}
\hline
\multicolumn{1}{c|}{\textbf{Model}} &
\multicolumn{1}{c|}{\textbf{Optimizer}} &
\multicolumn{3}{c|}{\textbf{Training NLL} $\downarrow$} &
\multicolumn{3}{c}{\textbf{Testing Top-1 ACC (\%)} $\uparrow$} \\
\cline{3-8}
& & \textbf{@50 epoch} & \textbf{@100 epoch} & \textbf{@200 epoch} & \textbf{@50 epoch} & \textbf{@100 epoch} & \textbf{@200 epoch} \\
\hline

VGG-11 & SGD w/ Momentum & 0.2610 $\pm$ 0.0101 & 0.1001 $\pm$ 0.0059 & 0.0096 $\pm$ 0.0001 & 85.88 $\pm$ 0.16 & 88.43 $\pm$ 0.40 & 90.05 $\pm$ 0.12 \\
VGG-11 & AdamW          & 0.4183 $\pm$ 0.2232 & 0.1574 $\pm$ 0.0698 & \textbf{0.0047 $\pm$ 0.0056} & 83.54 $\pm$ 3.87 & 87.83 $\pm$ 0.63 & 89.93 $\pm$ 0.18 \\
VGG-11 & Lion           & \textbf{0.2292 $\pm$ 0.0190} & 1.0760 $\pm$ 1.0448 & 0.0335 $\pm$ 0.0030 & 85.05 $\pm$ 1.22 & 83.79 $\pm$ 5.51 & 89.40 $\pm$ 0.11 \\
VGG-11 & LPSGDM         & 0.2679 $\pm$ 0.0535 & \textbf{0.0850 $\pm$ 0.0010} & 0.0142 $\pm$ 0.0019 & \textbf{86.15 $\pm$ 0.43} & \textbf{88.83 $\pm$ 0.41} & \textbf{90.73 $\pm$ 0.37} \\
\hline

ResNet-18 & SGD w/ Momentum & 0.1313 $\pm$ 0.0061 & 0.0175 $\pm$ 0.0017 & 0.0014 $\pm$ 0.0000 & 88.77 $\pm$ 0.41 & 91.70 $\pm$ 0.25 & 93.15 $\pm$ 0.09 \\
ResNet-18 & AdamW           & 0.0618 $\pm$ 0.0020 & 0.0107 $\pm$ 0.0000 & \textbf{0.0001 $\pm$ 0.0000} & 91.78 $\pm$ 0.29 & 93.09 $\pm$ 0.01 & 94.09 $\pm$ 0.17 \\
ResNet-18 & Lion            & 0.1101 $\pm$ 0.0006 & 0.0537 $\pm$ 0.0001 & 0.0002 $\pm$ 0.0001 & 90.53 $\pm$ 0.68 & 91.58 $\pm$ 0.05 & 94.04 $\pm$ 0.07 \\
ResNet-18 & LPSGDM          & \textbf{0.0498 $\pm$ 0.0027} & \textbf{0.0021 $\pm$ 0.0000} & 0.0004 $\pm$ 0.0002 & \textbf{92.12 $\pm$ 0.49} & \textbf{93.68 $\pm$ 0.02} & \textbf{94.81 $\pm$ 0.19} \\
\hline

ResNet-50 & SGD w/ Momentum & 0.13206 $\pm$ 0.00147 & 0.02201 $\pm$ 0.00207 & 0.00062 $\pm$ 0.00004 & 87.57 $\pm$ 0.82 & 90.73 $\pm$ 0.06 & 92.72 $\pm$ 0.16 \\
ResNet-50 & AdamW           & 0.07910 $\pm$ 0.00428 & 0.01305 $\pm$ 0.00124 & 0.00013 $\pm$ 0.00008 & 91.65 $\pm$ 0.51 & 93.45 $\pm$ 0.04 & 94.57 $\pm$ 0.02 \\
ResNet-50 & Lion            & 0.08016 $\pm$ 0.00207 & 0.01733 $\pm$ 0.00163 & \textbf{0.00010 $\pm$ 0.00012} & 91.15 $\pm$ 0.14 & 92.87 $\pm$ 0.11 & 94.34 $\pm$ 0.31 \\
ResNet-50 & LPSGDM          & \textbf{0.04780 $\pm$ 0.00035} & \textbf{0.00206 $\pm$ 0.00015} & 0.00070 $\pm$ 0.00067 & \textbf{92.22 $\pm$ 0.71} & \textbf{94.07 $\pm$ 0.08} & \textbf{94.84 $\pm$ 0.15} \\
\hline
\end{tabular}%
}
}
\end{table}

\begin{table*}[t]
\centering
\caption{Image classification on CIFAR-100 at the 50th, 100th, and 200th epochs.}
\label{tab:cifar100_epochs}

\setlength{\tabcolsep}{7.25pt}
\renewcommand{\arraystretch}{1.15}

\makebox[\textwidth][c]{%
\resizebox{1.25\textwidth}{!}{%
\begin{tabular}{c|c|ccc|ccc}
\hline
\multicolumn{1}{c|}{\textbf{Model}} &
\multicolumn{1}{c|}{\textbf{Optimizer}} & 
\multicolumn{3}{c|}{\textbf{Training NLL} $\downarrow$} & 
\multicolumn{3}{c}{\textbf{Testing Top-1 ACC (\%)} $\uparrow$} \\
\cline{3-8}
& & \textbf{@50 epoch} & \textbf{@100 epoch} & \textbf{@200 epoch} & \textbf{@50 epoch} & \textbf{@100 epoch} & \textbf{@200 epoch} \\
\hline

VGG-11 & SGD w/ Momentum & 1.6509 $\pm$ 0.0114 & \textbf{0.5577 $\pm$ 0.0169} & \textbf{0.0400 $\pm$ 0.0000} & 54.21 $\pm$ 1.07 & \textbf{58.33 $\pm$ 0.23} & 60.16 $\pm$ 0.29  \\
VGG-11 & AdamW           & 1.8164 $\pm$ 0.0234 & 1.3753 $\pm$ 0.0093 & 0.4451 $\pm$ 0.0071 & 51.01 $\pm$ 0.48 & 55.11 $\pm$ 1.03 & 59.34 $\pm$ 0.55 \\
VGG-11 & Lion            & \textbf{1.0459 $\pm$ 0.0148} & 0.6791 $\pm$ 0.0162 & 0.2964 $\pm$ 0.0171 & \textbf{55.92 $\pm$ 0.16} & 56.27 $\pm$ 0.36 & 58.74 $\pm$ 0.21 \\
VGG-11 & LPSGDM          & 1.2445 $\pm$ 0.0133 & 0.6810 $\pm$ 0.0263 & 0.2754 $\pm$ 0.0122 & 55.48 $\pm$ 0.10 & 57.78 $\pm$ 0.17 & \textbf{60.46 $\pm$ 0.25}\\
\hline

ResNet-18 & SGD w/ Momentum & 1.8969 $\pm$ 0.0476 & 1.5946 $\pm$ 0.0713 & 0.2105 $\pm$ 0.0129 & 44.12 $\pm$ 6.99 & 53.14 $\pm$ 1.83 & 76.86 $\pm$ 0.38 \\
ResNet-18 & AdamW           & 1.7273 $\pm$ 0.0028 & 1.4463 $\pm$ 0.0052 & 0.0326 $\pm$ 0.0004 & 47.92 $\pm$ 1.88 & 53.85 $\pm$ 3.72 & 74.23 $\pm$ 0.38 \\
ResNet-18 & Lion            & \textbf{0.4636 $\pm$ 0.0037} & \textbf{0.2078 $\pm$ 0.0009} & \textbf{0.0032 $\pm$ 0.0005} & \textbf{65.39 $\pm$ 1.00} & \textbf{67.64 $\pm$ 0.28} & 73.07 $\pm$ 0.11 \\
ResNet-18 & LPSGDM         & 1.7152 $\pm$ 0.0011 & 1.4065 $\pm$ 0.0035 & 1.2290 $\pm$ 0.0022 & 48.03 $\pm$ 1.92 & 54.31 $\pm$ 3.33 & \textbf{77.98 $\pm$ 0.35} \\
\hline

ResNet-50 & SGD w/ Momentum & 0.5314 $\pm$ 0.0077 & \textbf{0.0507 $\pm$ 0.0053} & 0.0083 $\pm$ 0.0003 & 63.34 $\pm$ 0.65 & 68.08 $\pm$ 0.25 & 69.70 $\pm$ 0.40 \\
ResNet-50 & AdamW           & 0.5210 $\pm$ 0.0056 & 0.2148 $\pm$ 0.0023 & \textbf{0.0021 $\pm$ 0.0003} & 65.84 $\pm$ 0.71 & 67.55 $\pm$ 0.70 & 74.91 $\pm$ 0.41 \\
ResNet-50 & Lion            & \textbf{0.2763 $\pm$ 0.0114} & 0.0744 $\pm$ 0.0027 & 0.0030 $\pm$ 0.0004 & \textbf{68.00 $\pm$ 1.10} & 70.39 $\pm$ 0.60 & 72.99 $\pm$ 0.51 \\
ResNet-50 & LPSGDM         & 0.4826 $\pm$ 0.0059 & 0.0888 $\pm$ 0.0018 & 0.0885 $\pm$ 0.0004 & 67.47 $\pm$ 0.52 & \textbf{71.45 $\pm$ 0.24} & \textbf{76.78 $\pm$ 0.17} \\
\hline

\end{tabular}
}
}
\end{table*}

\begin{table*}[!t]
\centering
\caption{Image classification on ImageNet-1K at the 20th, 40th, and 60th epochs.}
\label{tab:imagenet1k_epochs}

\setlength{\tabcolsep}{7pt}
\renewcommand{\arraystretch}{1.15}

\makebox[\textwidth][c]{%
\resizebox{1.25\textwidth}{!}{%
\begin{tabular}{c|c|ccc|ccc}
\hline
\multicolumn{1}{c|}{\textbf{Model}} &
\multicolumn{1}{c|}{\textbf{Optimizer}} & 
\multicolumn{3}{c|}{\textbf{Training NLL} $\downarrow$} & 
\multicolumn{3}{c}{\textbf{Testing Top-1 ACC (\%)} $\uparrow$} \\
\cline{3-8}
& & \textbf{@20 epoch} & \textbf{@40 epoch} & \textbf{@60 epoch} & \textbf{@20 epoch} & \textbf{@40 epoch} & \textbf{@60 epoch} \\
\hline
ResNet-50 & SGD w/ Momentum & 2.5096 $\pm$ 0.0086 & 1.8504 $\pm$ 0.0256 & 1.5996 $\pm$ 0.0239 & 52.19 $\pm$ 0.18 & 63.32 $\pm$ 0.11 & 67.16 $\pm$ 0.16 \\
ResNet-50 & AdamW          & \textbf{1.5629 $\pm$ 0.0023} & 1.2283 $\pm$ 0.0244 & \textbf{1.0755 $\pm$ 0.0155} & \textbf{67.80 $\pm$ 0.09} & 70.65 $\pm$ 0.55 & 71.79 $\pm$ 0.35 \\
ResNet-50 & Lion           & 1.7574 $\pm$ 0.0114 & 1.4263 $\pm$ 0.0781 & 1.1685 $\pm$ 0.0194 & 65.23 $\pm$ 0.66 & 67.76 $\pm$ 3.61 & 72.15 $\pm$ 0.21 \\
ResNet-50 & LPSGDM        & 1.6447 $\pm$ 0.0047 & \textbf{1.1491 $\pm$ 0.0031} & 1.1253 $\pm$ 0.0153 & 66.26 $\pm$ 0.54 & \textbf{73.48 $\pm$ 0.16} & \textbf{74.04 $\pm$ 0.10} \\
\hline
\end{tabular}
}
}
\end{table*}

\section{Conclusion}

In this paper, we revisited the optimization of DNNs under evolving curvature anisotropy. Our analysis suggested that a single fixed norm is generally insufficient to match the substantial geometric changes that occur during training. Motivated by this observation, we proposed a cosine $\ell_p$-norm scheme and incorporated it into SGD and SGDM, resulting in LPSGD and LPSGDM. We further established convergence guarantees for the proposed methods on nonconvex setting, showing that they attain an $O(T^{-1/2})$ convergence rate. Extensive experiments on CIFAR-10, CIFAR-100, and ImageNet-1K with VGG-11, ResNet-18, and ResNet-50 demonstrated that the proposed methods achieve consistently strong performance and competitive generalization. 

\section*{Acknowledgment}

This work was supported by grants from the National Natural Science Foundation of China under Grant 62302325. This work was also supported by the Natural Science Foundation of Jiangsu Province in China under Grant BK20230485 and by Project Funded by the Priority Academic Program Development of Jiangsu Higher Education Institutions.

\appendix
\section{HYPER-PARAMETERS}
\label{HYPER-PARAMETERS}
The selected hyper-parameters for all experiments are summarized in Tables \ref{tab:cifar10_hparams}--\ref{tab:imagenet_hparams}. Specifically, Table \ref{tab:cifar10_hparams} describes
the parameter settings of different optimizers on CIFAR-10, using VGG-11, ResNet-18, and ResNet-50. Table \ref{tab:cifar100_hparams} presents the parameter settings of different optimizers on CIFAR-100, using VGG-11, ResNet-18, and ResNet-50. Table \ref{tab:imagenet_hparams} reports the parameter settings of different optimizers on ImageNet-1K, using ResNet-50.

\begin{table*}[!t]
\centering
\caption{CIFAR-10 hyper-parameters.}
\label{tab:cifar10_hparams}
\setlength{\tabcolsep}{7pt}
\renewcommand{\arraystretch}{1.15}
\makebox[\textwidth][c]{%
\resizebox{1.25\textwidth}{!}{%
\begin{tabular}{l l c c c l c c c c}
\hline
Model & Optimizer & Batch Size & $p$ & Learning Rate & Schedule & $\beta_1$ & $\beta_2$ & $\lambda$ & $\epsilon$ \\
\hline
VGG-11    & SGD w/ Momentum & 128 & -- & 0.1    & warmup cosine decay & 0.9 & --    & 0.001  & 1e-8 \\
VGG-11    & AdamW           & 128 & -- & 0.001  & warmup cosine decay & 0.9 & 0.999 & 0.1    & 1e-8 \\
VGG-11    & Lion            & 128 & -- & 0.0001 & warmup cosine decay & 0.9 & 0.999 & 0.001  & --   \\
VGG-11    & LPSGDM          & 128 & 6  & 0.001  & warmup cosine decay & 0.9 & --    & 0.01   & 1e-8 \\
\hline
ResNet-18 & SGD w/ Momentum & 128 & -- & 0.1    & warmup cosine decay & 0.9 & --    & 0.001  & 1e-8 \\
ResNet-18 & AdamW           & 128 & -- & 0.001  & warmup cosine decay & 0.9 & 0.999 & 0.1    & 1e-8 \\
ResNet-18 & Lion            & 128 & -- & 0.0001 & warmup cosine decay & 0.9 & 0.999 & 0.001  & --   \\
ResNet-18 & LPSGDM          & 128 & 6  & 0.001  & warmup cosine decay & 0.9 & --    & 0.01   & 1e-8 \\
\hline
ResNet-50 & SGD w/ Momentum & 128 & -- & 0.05   & warmup cosine decay & 0.9 & --    & 0.0005 & 1e-8 \\
ResNet-50 & AdamW           & 128 & -- & 0.001  & warmup cosine decay & 0.9 & 0.999 & 0.0001 & 1e-8 \\
ResNet-50 & Lion            & 128 & -- & 0.0001 & warmup cosine decay & 0.9 & 0.999 & 0.0005 & --   \\
ResNet-50 & LPSGDM          & 128 & 6  & 0.002  & warmup cosine decay & 0.9 & --    & 0.0001 & 1e-8 \\
\hline
\end{tabular}
}
}
\end{table*}

\begin{table*}[!t]
\centering
\caption{CIFAR-100 hyper-parameters.}
\label{tab:cifar100_hparams}
\setlength{\tabcolsep}{7pt}
\renewcommand{\arraystretch}{1.15}
\makebox[\textwidth][c]{%
\resizebox{1.25\textwidth}{!}{%
\begin{tabular}{l l c c c l c c c c}
\hline
Model & Optimizer & Batch Size & $p$ & Learning Rate & Schedule & $\beta_1$ & $\beta_2$ & $\lambda$ & $\epsilon$ \\
\hline
VGG-11    & SGD w/ Momentum & 128 & -- & 0.1    & warmup cosine decay & 0.9 & --    & 0.001   & 1e-8 \\
VGG-11    & AdamW           & 128 & -- & 0.001  & warmup cosine decay & 0.9 & 0.999 & 0.001   & 1e-8 \\
VGG-11    & Lion            & 128 & -- & 0.0001 & warmup cosine decay & 0.9 & 0.999 & 0.0001  & --   \\
VGG-11    & LPSGDM          & 128 & 8  & 0.0005 & warmup cosine decay & 0.9 & --    & 0.0005  & 1e-8 \\
\hline
ResNet-18 & SGD w/ Momentum & 128 & -- & 0.1    & warmup cosine decay & 0.9 & --    & 0.01    & 1e-8 \\
ResNet-18 & AdamW           & 128 & -- & 0.01   & warmup cosine decay & 0.9 & 0.999 & 0.1     & 1e-8 \\
ResNet-18 & Lion            & 128 & -- & 0.001  & warmup cosine decay & 0.9 & 0.999 & 0.1     & --   \\
ResNet-18 & LPSGDM          & 128 & 6  & 0.01   & warmup cosine decay & 0.9 & --    & 0.1     & 1e-8 \\
\hline
ResNet-50 & SGD w/ Momentum & 128 & -- & 0.1    & warmup cosine decay & 0.9 & --    & 1e-5    & 1e-8 \\
ResNet-50 & AdamW           & 128 & -- & 0.001  & warmup cosine decay & 0.9 & 0.999 & 0.1     & 1e-8 \\
ResNet-50 & Lion            & 128 & -- & 0.001  & warmup cosine decay & 0.9 & 0.999 & 1e-5    & --   \\
ResNet-50 & LPSGDM          & 128 & 6  & 0.001  & warmup cosine decay & 0.9 & --    & 0.1     & 1e-8 \\
\hline
\end{tabular}
}
}
\end{table*}

\begin{table*}[!t]
\centering
\caption{ImageNet-1K hyper-parameters.}
\label{tab:imagenet_hparams}
\setlength{\tabcolsep}{7pt}
\renewcommand{\arraystretch}{1.15}
\makebox[\textwidth][c]{%
\resizebox{1.25\textwidth}{!}{%
\begin{tabular}{l l c c c l c c c c}
\hline
Model & Optimizer & Batch Size & $p$ & Learning Rate & Schedule & $\beta_1$ & $\beta_2$ & $\lambda$ & $\epsilon$ \\
\hline
ResNet-50 & SGD w/ Momentum & 128 & -- & 0.01   & warmup cosine decay & 0.9 & --    & 0.0005 & --   \\
ResNet-50 & AdamW           & 128 & -- & 0.001  & warmup cosine decay & 0.9 & 0.999 & 0.0001 & 1e-8 \\
ResNet-50 & Lion            & 128 & -- & 0.0003 & warmup cosine decay & 0.9 & 0.99  & 0.01   & --   \\
ResNet-50 & LPSGDM          & 128 & 9  & 0.002  & warmup cosine decay & 0.9 & --    & 0.005  & 1e-8 \\
\hline
\end{tabular}
}
}
\end{table*}

\end{document}